\newtheorem{assumption}{Assumption}
\newtheorem{prob}{Problem}
\newcommand{\WinnerTakesAll}{\textsc{\texttt{WinnerTakesAll}}}
\newcommand{\WinnerTakesAllFirst}{\textsc{\texttt{WinnerTakes}}}
\newcommand{\WinnerTakesAllSecond}{\textsc{\texttt{All}}}
\newcommand{\WinnerTakesAllThird}{\textsc{\texttt{Winner}}}
\newcommand{\WinnerTakesAllFourth}{\textsc{\texttt{TakesAll}}}
\newcommand{\Bottleneck}{\textsc{\texttt{Bottleneck}}}
\newcommand{\BottleneckFirst}{\textsc{\texttt{Bottlene}}}
\newcommand{\BottleneckSecond}{\textsc{\texttt{ck}}}
\newcommand{\eg}{\textit{e.g.}}
\newcommand{\ie}{\textit{i.e.}}
\newcommand{\etal}{\textit{et al}.}
\newcommand{\argmax}{\arg\!\max}
\begin{document}

\title{Distributed Assignment with Limited Communication for Multi-Robot Multi-Target Tracking
\thanks{This material is based upon work supported by the National Science Foundation under Grant No. 1637915.}
}

\titlerunning{Distributed Assignment with Limited Communication}   

\author{Yoonchang Sung \and 
Ashish Kumar Budhiraja \and
Ryan K. Williams \and
Pratap Tokekar
}

\authorrunning{Sung et al.} 

\institute{Y. Sung \at
              \email{yooncs8@vt.edu}           
           \and
           A. K. Budhiraja \at
              \email{ashishkb@vt.edu}
           \and
           R. K. Williams \at
              \email{rywilli1@vt.edu}
           \and   
           P. Tokekar \at
              \email{tokekar@vt.edu}
           \and
           Department of Electrical \& Computer Engineering, Virginia Tech, 1185 Perry Street, Blacksburg, VA 24061, USA
}

\date{Received: date / Accepted: date}

\maketitle

\begin{abstract}
We study the problem of tracking multiple moving targets using a team of mobile robots. Each robot has a set of motion primitives to choose from in order to collectively maximize the number of targets tracked or the total quality of tracking. Our focus is on scenarios where communication is limited and the robots have limited time to share information with their neighbors. As a result, we seek distributed algorithms that can find solutions in a bounded amount of time. We present two algorithms: (1) a greedy algorithm that is guaranteed to find a $2$--approximation to the optimal (centralized) solution but requiring $|R|$ communication rounds in the worst case, where $|R|$ denotes the number of robots; and (2) a \emph{local} algorithm that finds a $\mathcal{O}\left((1+\epsilon)(1+1/h)\right)$--approximation algorithm in $\mathcal{O}(h\log 1/\epsilon)$ communication rounds. Here, $h$ and $\epsilon$ are parameters that allow the user to trade-off the solution quality with communication time. In addition to theoretical results, we present empirical evaluation including comparisons with centralized optimal solutions.
\end{abstract}

\section{Introduction}~\label{sec:int}
We study the problem of assigning robots with limited Field-Of-View (FOV) sensors to track multiple moving targets. Multi-robot multi-target tracking is a well-studied topic in robotics~\cite{parker1997cooperative,parker2002distributed,touzet2000robot,kolling2007cooperative,honig2016dynamic}. We focus on scenarios where the number of robots is large and solving the problem locally rather than centrally is desirable. The robots may have  limited communication range and limited bandwidth. As such, we seek assignment algorithms that rely on local information and only require a limited amount of communication with neighboring robots. 

Constraints on communication impose challenges for robot coordination as global information may not always be available to all the robots within the network. As a result, it may not be always possible to design algorithms that operate on local information while still ensuring global optimality. Recently, Gharesifard and Smith~\cite{gharesifard2017distributed} studied how limited information due to the communication graph topology affects the global performance. Their analysis applies for the case when the robots are allowed only one round of communication with their neighbors. If the robots are allowed multiple rounds of communication, they can propagate the information across the network. Given sufficient rounds of communication, all robots will have access to global information, and therefore can essentially solve the centralized version of the problem. In this paper, we investigate the relationship between the number of communication rounds allowed for the robots and the performance guarantees. We focus on the problem of distributed multi-robot, multi-target assignment for our investigation (Figure~\ref{fig:description}).

\begin{figure}[thpb]
\centering
\includegraphics[scale=0.36]{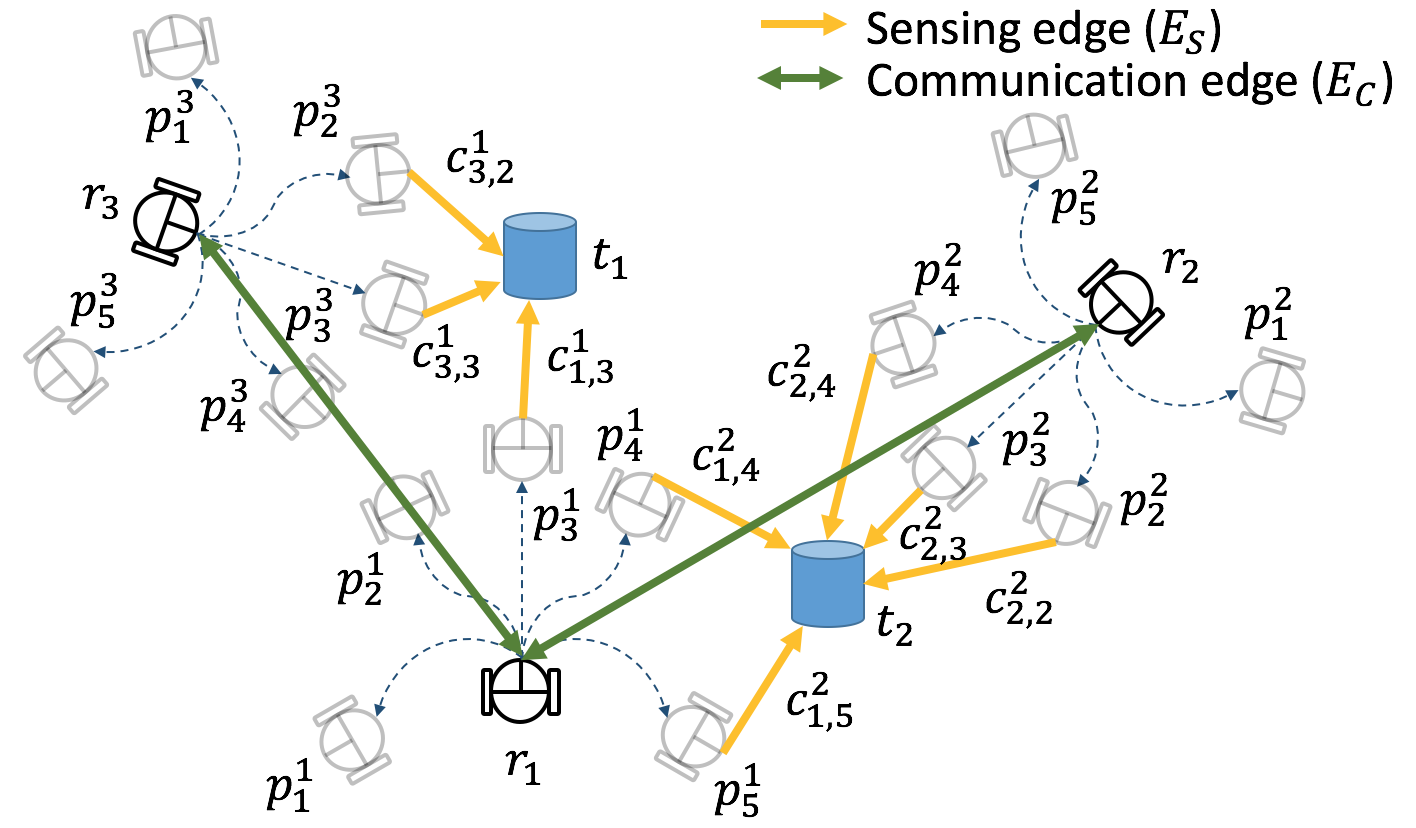}
\caption{Description of multi-robot task allocation for multi-target tracking. In this example, three robots ($\textbf{r}_1, \textbf{r}_2, \textbf{r}_3$) are tracking two moving targets ($\textbf{t}_1, \textbf{t}_2$). Each robot has five motion primitives ($\textbf{p}_m^i$) to choose from at each time step. $c$ represents the cost of observing a target from a motion primitive.
}
\label{fig:description}
\end{figure}

We assume that each robot has a number of motion primitives to choose from. A motion primitive is a local trajectory obtained by applying a sequence of actions~\cite{howard2014model}. A motion primitive can track a target if the target is in the FOV of the robot. The set of targets tracked by different motion primitives may be different. The assignment of targets to robots is therefore coupled with the selection of motion primitives for each robot. Our goal is to assign motion primitives to the robots so as to track the most number of targets or maximize the quality of tracking. We term this as the distributed Simultaneous Action and Target Assignment (SATA) problem. 

This problem can be viewed as the dual of the set cover problem, known as the maximum (weighted) cover \cite{suomela2013survey}. Every motion primitive covers some subset of the targets. Therefore, we would like to pick motion primitives that maximize the number (or weight) of covered targets. However, we have the additional constraint that only one motion primitive per robot can be chosen at each step. This implies that the relationship between a robot and the corresponding motion primitives turns out to be a packing problem~\cite{suomela2013survey} where only one motion primitive can be ``packed'' per robot. The combination of the two aforementioned problems is called a Mixed Packing and Covering Problem (MPCP)~\cite{young2001sequential}. 

We study two versions of the problem. The first version can be formulated as a (sub)modular maximization problem subject to a partition matroid constraint~\cite{nemhauser1978analysis}. A sequential greedy algorithm, where the robots take turns to greedily choose motion primitives, is known to yield a $2$--approximation for this problem~\cite{tokekar2014multi}. We evaluate the empirical performance of this algorithm by comparing it with a centralized (globally optimal) solution. The drawback of the sequential greedy algorithm is that it requires at least as many communication rounds as the number of robots. This may be too slow in practice. Consequently, we study a second version of the problem for which we present a \emph{local} algorithm whose performance degrades gracefully (and provably) as a function of the number of communication rounds.

\begin{figure}[thpb]
\centering
\includegraphics[scale=0.50]{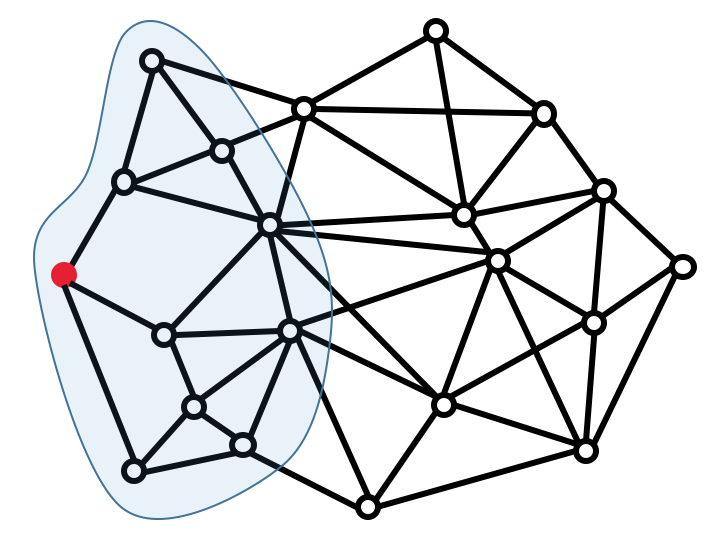}
\caption{Communication graph. The blue shaded region indicates a radius--2 neighborhood of the red solid node. The red solid node may be unaware of the entire communication graph topology. A local algorithm that works for the red solid node only requires local information of nodes in the blue shaded region. The same local algorithm runs on all the nodes and ensures bounded approximation guarantees on the global optimality.}
\label{fig:local}
\end{figure}
A local algorithm~\cite{suomela2013survey} is a constant-time distributed algorithm that is independent of the size of a network. This enables a robot only to depend on local inputs in a fixed-radius neighborhood of robots (Figure~\ref{fig:local}). The robot does not need to know information beyond its local neighborhood, thereby achieving better scalability.

Flor{\'e}en~\etal~\cite{floreen2011local} proposed a local algorithm to solve MPCP using max-min/min-max Linear Programming (LP) in a distributed manner. We show how to leverage this algorithm to solve SATA. This algorithm has a bounded communication complexity unlike typical distributed algorithms. Specifically, the algorithm yields a $\mathcal{O}\left((1+\epsilon)(1+1/h)\right)$ approximation to the globally optimal solution in $\mathcal{O}(h\log{1/\epsilon})$ synchronous communication rounds where $h$ and $\epsilon$ are input parameters.\footnote{An algorithm is called a $\mathcal{O}(x)$ approximation to a maximization problem if it guarantees a solution whose value is at least $\frac{c}{x}$ of the optimal value, where $c$ is some constant.} We verify the theoretical results through empirical evaluation.

The contributions of this paper are as follows:
\begin{enumerate}
\item We present two versions of the SATA problem.
\item We show how to use the greedy algorithm and adapt the local algorithm for solving the two versions of the SATA problem.
\item We perform empirical comparisons of the proposed algorithm with baseline centralized solutions.
\item We demonstrate the applicability of the proposed algorithm through Gazebo simulations. 
\end{enumerate}

A preliminary version of this paper was presented at ICRA 2018~\cite{sung2018distributed}. This expanded paper extends the preliminary version with a more thorough literature survey, additional theoretical analysis, and significantly expanded empirical analysis including a description of how to implement the greedy algorithm in practice.

The rest of the paper is organized as follows. We begin by introducing the related work in Section~\ref{sec:related}. We describe the problem setup in Section~\ref{sec:prob}. Our proposed distributed algorithms are presented in Section~\ref{sec:distributed}. We present results from representative simulations in Section~\ref{sec:sim} before concluding with a discussion of future work in Section~\ref{sec:conc}.

\section{Related Work}~\label{sec:related}

A number of algorithms have been designed to improve multi-robot coordination under limited bandwidth \cite{yan2013survey,li2017high,otte2013any,otte2018dynamic,kassir2016communication} and under communication range constraints~\cite{williams2013constrained,vander2015algorithms,kantaros2015distributed}. This includes algorithms that enforce connectivity constraints~\cite{kantaros2017distributed,williams2015global}, explicitly trigger when to communicate~\cite{dimarogonas2012distributed,zhou2018active,ge2017distributed1} and operate when connectivity is intermittent~\cite{best2018planning,guo2018multirobot}. In this section, we focus on work that is most closely related to the SATA problem and local algorithms.

\subsection{Multi-Robot Target Tracking} \label{subsec:related_tracking}

There have been many studies on cooperative target tracking in both control and robotics communities. We highlight some of the recent related work in this section. For a more comprehensive overview of multi-robot multi-target tracking, see the recent surveys~\cite{khan2016cooperative,robin2016multi}. 

Charrow~\etal~\cite{charrow2014approximate} proposed approximate representations of the belief to design a control policy for multiple robots to track one mobile target. The proposed scheme, however, requires a centralized approach. Yu~\etal~\cite{yu2015cooperative} worked on an auction-based decentralized algorithm for cooperative path planning to track a moving target. Ahmad~\etal~\cite{ahmad2017online} presented a unified method of localizing robots and tracking a target that is scalable with respect to the number of robots. Zhou and Roumeliotis~\cite{zhou2011multirobot} developed an algorithm that finds an optimal trajectory of multiple robots for the active target tracking problem. Capitan~\etal~\cite{capitan2013decentralized} proposed a decentralized cooperative multi-robot algorithm using auctioned partially observable Markov decision processes. The performance of decentralized data fusion under limited communication was successfully shown but theoretical bounds on communication rounds were not covered. Moreover, theoretical properties presented in the above references considered single target tracking, which may not necessarily hold in the case of tracking multiple targets in a distributed fashion.



Pimenta~\etal~\cite{pimenta2009simultaneous} adopted Voronoi partitioning to develop a distributed multi-target tracking algorithm. However, their objective was to cover an environment coupled with multi-target tracking.
Banfi~\etal~\cite{banfi2018integer} addressed the \emph{fairness} issue for cooperative multi-robot multi-target tracking, which is achieving balanced coverage among different targets. One of the problems that we define in Section~\ref{sec:prob} (\ie, Problem~\ref{prob:bottleneck}) has a similar motivation. However, unlike the algorithm in Banfi~\etal~\cite{banfi2018integer}, we are able to give a global performance guarantee. Xu~\etal~\cite{xu2013decentralized} presented a decentralized algorithm that jointly solves the problem of assigning robots to targets and positioning robots using mixed-integer nonlinear programming. While they proved the complexity in terms of computational time and communication (\ie, the amount of data needed to be communicated), the solution quality was only evaluated empirically. Instead, we bound the solution quality as a function of the communication rounds. Furthermore, our formulation takes as input a set of discrete actions (\ie, motion primitives) that the robot must choose from, unlike the previous work.

We study a problem similar to the one termed as Cooperative Multi-robot Observation of Multiple Moving Targets (CMOMMT) proposed by Parker and Emmons~\cite{parker1997cooperative}. The objective in CMOMMT is to maximize the collective time of observing targets. 
Parker~\cite{parker2002distributed} developed a distributed algorithm for CMOMMT that computes a local force vector to find a direction vector for each robot. We empirically compare this algorithm with our proposed one and report the results in Section~\ref{sec:sim}.  
Kolling and Carpin \cite{kolling2007cooperative} studied the behavioral CMOMMT that added a new mode (\ie, help) to the conventional track and search modes of CMOMMT. The help mode asks other robots to track a target if the target escapes from the FOV of some robot. Although our work does not allow mode changes, previous works regarding CMOMMT did not provide theoretical optimality guarantees and did not explicitly consider scenarios where the communication bandwidth is limited.
Refer to Section IV(C) of Reference~\cite{khan2016cooperative} for a more detailed summary of CMOMMT.

In our prior work~\cite{tokekar2014multi}, we addressed the problem of selecting trajectories for robots that can track the maximum number of targets using a team of robots. However, no bound on the number of communication rounds was presented, possibly resulting in all-to-all communication in the worst case. Instead, in this work, we introduce a new version of the problem and also explicitly bound the amount of communication required for target assignment.

\subsection{Multi-Robot Task Assignment} \label{subsec:related_assignment}

Multi-robot task assignment can be formulated as a discrete combinatorial optimization problem. The work by Gerkey and Matari\'c~\cite{gerkey2004formal} and the more recent work by Korsah~\etal~\cite{korsah2013comprehensive} contain detailed survey of this problem. There exists distributed algorithms with provable guarantees for different versions of this problem~\cite{choi2009consensus,luo2015distributed,liu2011assessing}. 
There also exists various multi-robot deployment strategies for task assignment under  communication constraints. These constraints include limited available information~\cite{kanakia2016modeling}, limited communication flows~\cite{le2012adaptive}, and connectivity requirement~\cite{kantaros2016global}. See the survey papers~\cite{zavlanos2011graph,ge2017distributed2} on these results. 
Ny~\etal~\cite{le2012adaptive} studied a formulation with a similar communication constraint as ours. However, their formulation assumed that the robots know which targets to track. In this paper, we tackle the challenge of simultaneously assigning robots to targets by choosing motion primitives with limited communication bandwidth which might degrade task performance when there are unreliable communication links and communication delays.


Turpin~\etal~\cite{turpin2014capt} proposed a distributed algorithm that assigns robots to goal locations while generating collision-free trajectories.
Morgan~\etal~\cite{morgan2016swarm} solved the assignment problem by using distributed auctions and generating collision-free trajectories by using sequential convex programming. 
Bandyopadhyay~\etal~\cite{bandyopadhyay2017probabilistic} adopted the Eulerian framework for both swarm formation control and assignment.
However, these works may not be suitable for target tracking applications as the targets were assumed to be static.
For more survey results about SATA, see the work by Chung~\etal~\cite{chung2018survey}.
Recently, Otte~\etal~\cite{otte2017multi} investigated the effect of communication quality on auction-based multi-robot task assignment. None of the above works, however, analyzed the effect of communication rounds on the solution quality, as is the focus of our work.

\subsection{Local Algorithms} \label{subsec:local_alg}

A local algorithm~\cite{angluin1980local,linial1992locality,naor1995can} is a distributed algorithm that is guaranteed to achieve desired objective in a finite (typically, fixed) amount of time. The typical approach is to find approximate solutions with provable (and global) performance guarantees while ensuring a bound on the communication complexity that is independent of the number of vertices in the graph. Local algorithms have been proposed for a number of graph-theoretic problems. These include, graph matching~\cite{hanckowiak2001distributed}, vertex cover~\cite{aastrand2009local,aastrand2010fast}, dominating set~\cite{lenzen2010minimum}, and set cover~\cite{kuhn2006price}. Suomela~\cite{suomela2013survey} gives a broad survey of local algorithms. We build on this work and adapt a local algorithm for solving SATA.



\section{Problem Description} \label{sec:prob}
Consider a scenario where multiple robots are tracking multiple mobile targets. Robots can observe targets within their FOV and predict the future states of targets. Based on predicted target states, robots decide where to move (\ie, by selecting a motion primitive) in order to keep track of targets. By discretizing time, the problem becomes one of combinatorial optimizations --- choose the next position of robots based on the predicted position of the targets. Thus, we solve the SATA problem at each time step.

We define sets, $R$ and $T$, to denote the collection of robot and target labels respectively: $R=\{1,...,i,...,|R|\}$ for robot labels and $T=\{1,...,j,...,|T|\}$ for target labels.
Let $r$ and $t$ denote the set of robot states and \emph{predicted} target states, respectively. In this paper, states are given by the positions of the robots and the targets in 2- or 3-dimensional space. However, the algorithms presented in this paper can be used for more complex states (\eg, 6 degree-of-freedom pose).
Here, $r(k)=\{\textbf{r}_{1}(k),$ $...,\textbf{r}_{i}(k),...,$ $\textbf{r}_{|R|}(k)\}$ denotes the state of the robots at time $k$. $t(k)=\{\textbf{t}_{1}(k),...,\textbf{t}_{j}(k),$ $...,\textbf{t}_{|T|}(k)\}$ denotes the state of the targets at the next time step (\ie, at time $k+1$) predicted at time $k$.
We assume that the targets can be uniquely detected and multiple robots know if they are observing the same target. Therefore, no data association is required. Each robot independently obtains the predicted states, $t(k)$, by fusing its own noisy sensor measurements using, for example, a Kalman filter. 

We define the labels of available motion primitives for the $i$-th robot as $P^i=\{1,...,m,...,|P^i|\}.$
These labels correspond to a set of motion primitive states of the $i$-th robot at time $k$ given by: $p^i(k)=\{\textbf{p}_{1}^i(k),...,\textbf{p}_{m}^i(k),$ $...,\textbf{p}_{{|P^i|}}^i(k)\}$. Note that the term \emph{motion primitives} in this paper represents the future state of a robot at the next time step (\ie, at time $k+1$) computed at time $k$.
We compute a set of the motion primitives a priori by discretizing the continuous control input space. This can be done by various methods such as uniform random sampling or biased sampling based on predicted target states. However, once a set of the motion primitives is obtained, the rest of the proposed algorithms (in Section~\ref{sec:distributed}) remain the same.

We define $\mathcal{RS}(\textbf{p}_{m}^i(k))$ to be the set of targets that can be observed by the $m$-th motion primitive of $i$-th robot at time $k$. Specifically, the $j$-th target is said to be observable by the $m$-th motion primitive of a robot $i$, iff $\textbf{t}_{j}(k)\in \mathcal{RS}(\textbf{p}_{m}^i(k))$. It should be noted that only targets that were observed by robot $i$ at time $k-1$ are candidates to be considered for time $k$ because unobserved targets at time $k-1$ cannot be predicted by the robot $i$. Note also that since $\mathcal{RS}$ is a set function, we can model complex FOV and sensing range constraints that are not necessarily restricted to 2D.

We make the following assumptions.\footnote{After these assumptions, we omit the time index (\ie, $k$) for notational convenience.
}
\begin{assumption}
\textbf{(Communication Range).}
If two robots have a motion primitive that can observe the same target, then these robots can communicate with each other. This implies if there exists a target $j$ such that $\textbf{t}_{j}(k)\in \mathcal{RS}(\textbf{p}_{m}^i(k))$ and $\textbf{t}_{j}(k)\in \mathcal{RS}(\textbf{p}_{m}^l(k))$, then $i$-th and $l$-th robots can communicate with each other.
\label{assum:range}
\end{assumption}

\begin{assumption}
\textbf{(Synchronous Communication).}
All the robots have synchronous clocks leading to synchronous rounds of communication. 
\label{assum:sync}
\end{assumption}


From Assumption~\ref{assum:range}, neighboring robots can share their local information with each other when they observe the same targets. For example, robots can use techniques such as covariance intersection~\cite{niehsen2002information} to merge their individual predictions of the target's state into a joint prediction $T$. This can be achieved in one round of communication when each robot simply broadcasts its own estimate of all the targets within its FOV. Note that a robot does not need to know the prediction for all the targets but only the ones that are within the FOV of one of its motion primitives. In this sense, a communication graph $\mathcal{G}_C=(R,E_C)$ can be created from a sensing graph $\mathcal{G}_S=(P\cup{T},E_S)$ at each time, where $E_C$ and $E_S$ denote edges among robots and edges between targets and motion primitives, respectively.


As shown in Figure~\ref{fig:description}, each robot is able to compute feasible motion primitives of its own and detect multiple unique targets within the FOV. Then, the objective of the proposed problem is to choose one of the motion primitives for each robot, yielding either the best quality of tracking or the maximum number of targets tracked by the robots, depending on the application. One possible quality of tracking can be measured by the summation of all distances between selected primitives and the observed targets.

Let $x_{m}^{i}$ be the binary variable which represents the $i$-th robot selecting the $m$-th motion primitive. That is, $x_{m}^{i}=1$ if a motion primitive $m$ is selected by a robot $i$ 
and $0$ otherwise.\footnote{If all $x_{m}^{i}=0$ for a robot $i$, then it can choose any motion primitives since the objective value will remain the same.} Since each robot can choose only one motion primitive, we have:
\begin{equation}
\begin{split}
\sum_{m\in P^i}{x_{m}^{i}}\leq{1}\ \ &\forall i\in R.
\end{split}
\label{eqn:variable_x}
\end{equation}


Our objective is to find $x_{m}^i$. We propose two following problems.

\begin{prob}[\Bottleneck{}]
\label{prob:bottleneck}
The objective is to select primitives such that we maximize the minimum tracking quality:
\begin{equation}
\argmax_{x_{m}^{i}}\quad\min_{j\in T}\left(\sum_{i\in R}\sum_{m\in P^i}c_{i,m}^jx_{m}^{i}\right),
\label{eqn:bottleneck}
\end{equation}
subject to the constraints in Equation (\ref{eqn:variable_x}). Here, $c_{i,m}^j$ denotes weights on sensing edges $E_S$ between $m$-th motion primitive of $i$-th robot and $j$-th target.
\end{prob}

Here, $c_{i,m}^j$ can represent the tracking quality given by, for example, the inverse of the distance between $m$-th motion primitive of $i$-th robot and $j$-th target. Alternatively, $c_{i,m}$ can be binary (1 when the $m$-th motion primitive of robot $i$ sees target $j$ and $0$ otherwise) making the objective function 
equal to maximizing
the minimum number of targets tracked.

We term this as the \Bottleneck{} version of SATA. 
In the \Bottleneck{} version, multiple robots may be assigned to the same target. We also define a \WinnerTakesAllThird{}- \WinnerTakesAllFourth{} variant of SATA where only one robot is assigned to a target.


We define additional binary decision variable, $y_{i}^{j}$. $y_{i}^{j}$ represents the $i$-th robot assigned to track the $j$-th target. We have, $y_{i}^{j}=1$ if $i$-th robot is assigned to $j$-th target and $0$ otherwise. 

Since we restrict only one robot to be assigned to the target (unlike \Bottleneck{}), we have:
\begin{equation}
\begin{split}
\sum_{i\in R}{y_{i}^{j}}\leq{1}\ \ &\forall j\in T.
\end{split}
\label{eqn:variable_y}
\end{equation}

\begin{prob}[\WinnerTakesAll{}]
\label{prob:winnertakesall}
The objective is to maximize the total quality of tracking given by,
\begin{equation}
\argmax_{x_{m}^{i},y_{i}^{j}}\sum_{j\in T}\left(\sum_{i\in R}y_{i}^{j}\left(\sum_{m\in P^i}c_{i,m}^jx_{m}^{i}\right)\right),
\label{eqn:objective}
\end{equation} 
subject to the constraints in Equations (\ref{eqn:variable_x}) and (\ref{eqn:variable_y}).
\end{prob}





Both versions of the SATA problem are NP-Hard~\cite{vazirani2001approximation}. The \WinnerTakesAll~version can be optimally solved using a Quadratic Mixed Integer Linear Programming (QMILP) in the centralized setting.\footnote{Note that Problem~\ref{prob:winnertakesall} can also be converted into a simpler Mixed Integer Linear Programming (MILP) by linearizing the product of the binary variables in Equation (\ref{eqn:objective}), which is not covered in this paper.} Our main contributions are to show how to solve both problems in a distributed manner: an LP-relaxation of the \Bottleneck{} variant using a local algorithm; and the \WinnerTakesAll{} variant using a greedy algorithm. The following theorems summarize the main contributions of our work.

\begin{theorem}~\label{theorem:proposed_local}
Let $\bigtriangleup_R\geq2$ be the maximum number of motion primitives per robot and $\bigtriangleup_T\geq2$ be the maximum number of motion primitives that can see a target. There exists a local algorithm that finds an $\bigtriangleup_R(1+\epsilon)(1+1/h)(1-1/\bigtriangleup_T)$ approximation in $\mathcal{O}(h\log{1/\epsilon})$ synchronous communication rounds for the LP-relaxation of the \Bottleneck~version of SATA problem, where $h$ and $\epsilon>0$ are parameters.
\end{theorem}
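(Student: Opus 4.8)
The plan is to cast the LP-relaxation of the \Bottleneck{} problem as a max-min linear program in the exact form handled by Flor{\'e}en~\etal~\cite{floreen2011local}, identify the structural parameters of our instance with $\bigtriangleup_R$ and $\bigtriangleup_T$, and then argue that the message-passing the algorithm of~\cite{floreen2011local} requires can be carried out on the communication graph $\mathcal{G}_C$ with only a constant-factor blow-up in the number of rounds. First I would write the relaxation explicitly: relax $x_m^i\in\{0,1\}$ to $x_m^i\ge 0$ (the upper bound $x_m^i\le 1$ is implied by Equation~(\ref{eqn:variable_x})), introduce the scalar $\omega$, and rephrase Problem~\ref{prob:bottleneck} as ``maximize $\omega$ subject to $\sum_{m\in P^i}x_m^i\le 1$ for every $i\in R$ and $\sum_{i\in R}\sum_{m\in P^i}c_{i,m}^j x_m^i\ge\omega$ for every $j\in T$.'' This is precisely a max-min LP with nonnegative data: the packing constraints are indexed by robots and have all coefficients equal to $1$ (in particular $0/1$), and the covering constraints are indexed by targets and carry the nonnegative weights $c_{i,m}^j$.

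Next I would pin down the bipartite incidence structure that drives the local algorithm. A variable $x_m^i$ occurs in exactly one packing constraint (robot $i$'s) and in the covering constraint of target $j$ precisely when $\textbf{t}_j\in\mathcal{RS}(\textbf{p}_m^i)$. Hence the packing constraint of robot $i$ involves $|P^i|\le\bigtriangleup_R$ variables and the covering constraint of target $j$ involves at most $\bigtriangleup_T$ variables (the number of motion primitives that can see $j$). Matching these two degree bounds against the parameters in the approximation guarantee of~\cite{floreen2011local} gives the factor $\bigtriangleup_R(1+\epsilon)(1+1/h)(1-1/\bigtriangleup_T)$ using $\mathcal{O}(h\log 1/\epsilon)$ rounds on the dependency graph of the LP. To meet the regularity hypotheses used there — and to justify invoking the conditions $\bigtriangleup_R\ge 2$ and $\bigtriangleup_T\ge 2$ — I would first pad the instance to a $(\bigtriangleup_R,\bigtriangleup_T)$-regular one by adjoining dummy motion primitives and zero-weight incidences, which alters neither the optimum nor the feasible assignments of the original variables.

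The step I expect to be the crux is the reduction from ``rounds of the local algorithm'' to ``rounds of communication on $\mathcal{G}_C$.'' The algorithm of~\cite{floreen2011local} exchanges messages along edges of the LP dependency graph, in which two robots are linked whenever some target lies in the sensing set of a motion primitive of each of them; by Assumption~\ref{assum:range} any such pair of robots is adjacent in $\mathcal{G}_C$, so a single hop in the dependency graph costs $\mathcal{O}(1)$ rounds on $\mathcal{G}_C$, and by Assumption~\ref{assum:sync} these rounds are synchronous. One extra round at the start lets neighbors agree on the merged prediction $T$ (e.g.\ via covariance intersection), supplying the shared weights $c_{i,m}^j$. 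Consequently the $\mathcal{O}(h\log 1/\epsilon)$ round bound of~\cite{floreen2011local} carries over verbatim to communication rounds on $\mathcal{G}_C$, and combining this with the approximation factor derived above yields the theorem. I would close by observing that both the guarantee and the round count depend only on $\bigtriangleup_R$, $\bigtriangleup_T$, $h$, and $\epsilon$, and not on $|R|$ or $|T|$, so the resulting procedure is genuinely a local algorithm in the sense of Section~\ref{sec:related}.
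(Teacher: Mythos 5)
Your proposal follows essentially the same route as the paper: reformulate the LP-relaxation of the \Bottleneck{} objective as the max-min LP of Equation~(\ref{eqn:mpcp}) (the content of Lemma~\ref{lemma:equiv_bottleneck}), identify $\bigtriangleup_R$ and $\bigtriangleup_T$ with the packing/covering degree bounds, invoke the guarantee of Flor{\'e}en~\etal~\cite{floreen2011local} (Corollary~4.7), and realize the dependency-graph nodes on the robots so that, by Assumptions~\ref{assum:range} and~\ref{assum:sync}, each algorithmic round costs $\mathcal{O}(1)$ synchronous rounds on $\mathcal{G}_C$. Your only deviation is cosmetic: you pad to a $(\bigtriangleup_R,\bigtriangleup_T)$-regular instance, whereas the paper relies on the node-splitting technique of~\cite{floreen2011local} to reduce to $\bigtriangleup_R=2$; both are harmless preprocessing steps and the argument is otherwise the paper's.
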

The proof follows directly from the existence of the local algorithm described in the next section.
We show how the local algorithm for MPCP can be modified to solve SATA by means of a linear relaxation.

\begin{theorem}~\label{theorem:proposed_greedy}
There exists a $2$--approximation greedy algorithm for the \WinnerTakesAll~version of the SATA problem for any $\epsilon>0$ in polynomial time.
\end{theorem}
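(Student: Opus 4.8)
The plan is to eliminate the assignment variables $y_i^j$ in closed form, recognize the resulting objective as a monotone submodular set function over a partition matroid, and then apply the classical greedy guarantee for matroid-constrained submodular maximization; the distributed realization is the sequential (round-robin) greedy of \cite{tokekar2014multi}. Concretely, a feasible choice of motion primitives is a set $S\subseteq\mathcal{P}:=\bigcup_{i\in R}P^i$ with $|S\cap P^i|\le1$ for every $i$; by Equation~(\ref{eqn:variable_x}) these are exactly the feasible $x$. For such an $S$ the inner maximization over $y$ in Equation~(\ref{eqn:objective}) decouples across targets, and the optimal rule assigns each target $j$ to the robot whose selected primitive observes $j$ with the largest weight. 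Hence, assigning to each element $e=m\in P^i$ the weight $w_j(e):=c^j_{i,m}$ when $\textbf{t}_j\in\mathcal{RS}(\textbf{p}^i_m)$ and $w_j(e):=0$ otherwise, the best achievable objective for a fixed $S$ is
\[
f(S) \;=\; \sum_{j\in T} \max\big( \{0\} \cup \{\, w_j(e) : e\in S \,\} \big).
\]

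Next I would check that $f$ is normalized, monotone, and submodular. Monotonicity is clear since enlarging $S$ cannot decrease any pointwise maximum, and $f(\emptyset)=0$. Submodularity follows summand by summand: writing $g_j(S)=\max(\{0\}\cup\{w_j(e):e\in S\})$, for $S\subseteq S'$ and $e\notin S'$ we have $g_j(S\cup\{e\})-g_j(S)=\max\{0,\,w_j(e)-g_j(S)\}\ge\max\{0,\,w_j(e)-g_j(S')\}=g_j(S'\cup\{e\})-g_j(S')$, using $g_j(S)\le g_j(S')$, and a sum of submodular functions is submodular. The constraint $|S\cap P^i|\le1$ is precisely independence in the partition matroid on $\mathcal{P}$ with blocks $\{P^i\}_{i\in R}$, so \WinnerTakesAll{} is the maximization of a monotone submodular function over a partition matroid.

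Finally I would run the sequential greedy: process robots in an arbitrary fixed order $1,\dots,|R|$, and let robot $i$ add to the current set $S_{i-1}$ the primitive $s_i\in P^i$ of maximum marginal gain $f(S_{i-1}\cup\{s_i\})-f(S_{i-1})$. Let $O$ be an optimal feasible set and $o_i$ its (possibly empty) element in $P^i$. Since $S_{i-1}\cap P^i=\emptyset$, the candidate $o_i$ is available to robot $i$, so $f(S_i)-f(S_{i-1})\ge f(S_{i-1}\cup\{o_i\})-f(S_{i-1})$. Summing over $i$, the left side telescopes to $f(S)$ with $S:=S_{|R|}$; the right side is at least $f(S\cup O)-f(S)\ge f(O)-f(S)$ by submodularity (each $o_i$ is added to the subset $S_{i-1}\subseteq S\cup\{o_1,\dots,o_{i-1}\}$) and monotonicity. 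Hence $f(S)\ge f(O)-f(S)$, i.e. $f(S)\ge\tfrac12 f(O)$, a $2$--approximation in the paper's convention. Each robot evaluates at most $|P^i|$ candidates in $O(|T|)$ time each, so the algorithm is polynomial, and in the distributed setting robot $i$ needs only the earlier choices of robots sharing a target with it (Assumption~\ref{assum:range}), so at most $|R|$ synchronous rounds suffice; the parameter $\epsilon$ does not enter the greedy rule and is kept only for notational consistency with Theorem~\ref{theorem:proposed_local}.

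I expect the only nontrivial step to be the first one: pinning down the $y$-reduced objective $f$ and verifying that it is monotone submodular while the per-robot constraint is genuinely a matroid. The subtlety is that the ``winner-takes-all'' max over robots must be shown to preserve submodularity despite the partition structure; once the max-of-selected-weights form above is established, everything else is the textbook greedy bound for a single matroid constraint, specialized and implemented in the round-robin fashion of \cite{tokekar2014multi}.
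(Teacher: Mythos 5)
Your proposal is correct and follows essentially the same route as the paper: eliminating $y_i^j$ to obtain the sum-of-maxima objective is exactly the content of Lemma~\ref{lemma:greedy} (Appendix~\ref{append:greedy}), and the $2$--approximation then comes from monotone submodular maximization over the partition matroid $\{P^i\}_{i\in R}$, which the paper simply cites from~\cite{nemhauser1978analysis}. The only difference is that you spell out the telescoping argument for the sequential (locally greedy) bound and verify submodularity of the per-target maxima explicitly, where the paper defers to the reference; this also correctly identifies the objective as submodular rather than modular, and rightly notes that the $\epsilon$ in the theorem statement plays no role.
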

This directly follows from the fact that the problem is a modular maximization problem subject to a partition matroid constraint~\cite{nemhauser1978analysis}. The algorithms are described in the next section.


\section{Distributed Algorithms} \label{sec:distributed}
We begin by describing the local algorithm that solves the \Bottleneck{} version of SATA.

\subsection{Local Algorithm} \label{subsec:local}


In this section, we show how to solve the \Bottleneck~version of the SATA problem using a local algorithm. We adapt the local algorithm for solving max-min LPs given by Flor\'een~\etal~\cite{floreen2011local} to solve the SATA problem in a distributed manner. 

Consider the tripartite, weighted, and undirected graph, $\mathcal{G}=(R\cup{P}\cup{T},E)$ shown in Figure~\ref{fig:general}. Each edge $e\in{E}$ is either $e=(\textbf{r}_i,\textbf{p}_m^i)$ with weight 1 or $e=(\textbf{t}_j,\textbf{p}_m^i)$ with weight $c_{i,m}^{j}$. The maximum degree among robot nodes $\textbf{r}_i\in{r}$ is denoted by $\bigtriangleup_R$ and among target nodes $\textbf{t}_j\in{t}$ is $\bigtriangleup_T$. Each motion primitive $\textbf{p}_m^i\in{p^i}$ is associated with a variable $x_{m}^{i}$.
The upper two layers of $\mathcal{G}$ in Figure~\ref{fig:general} are related with a packing problem (Equation (\ref{eqn:objective})). The lower two layers are related with the covering problem. 

\begin{lemma}~\label{lemma:equiv_bottleneck}
The \Bottleneck~version (Equation (\ref{eqn:bottleneck})) can be rewritten as a linear relaxation of ILP:
\begin{equation}
\begin{split}
\mbox{maximize}\ \ \ &w \\
\mbox{subject to}\ \ \ &\sum_{m\in{P^i}}x_{m}^{i}\leq{1}\ \ \forall{i\in{R}} \\
&\sum_{i\in{R}}\sum_{m\in{P^i}}c_{i,m}^{j}x_{m}^{i}\geq{w}\ \ \forall{j\in{T}} \\
&\ \ \ \ \ \ \ \ \ \ x_{m}^{i}\geq{0}\ \ \forall{m\in{P^i}}.
\end{split}
\label{eqn:mpcp}
\end{equation}
\end{lemma}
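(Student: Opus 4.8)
The plan is to show that the optimization in Equation~(\ref{eqn:bottleneck}), after relaxing the integrality of the $x_m^i$ variables, is literally the same program as (\ref{eqn:mpcp}). First I would introduce an auxiliary scalar variable $w$ to represent the value of the inner minimum $\min_{j\in T}\bigl(\sum_{i\in R}\sum_{m\in P^i}c_{i,m}^j x_m^i\bigr)$. The standard epigraph trick says that maximizing this minimum is equivalent to maximizing $w$ subject to the family of constraints $w \le \sum_{i\in R}\sum_{m\in P^i}c_{i,m}^j x_m^i$ for every $j\in T$: at optimality $w$ will be pushed up to exactly the value of the smallest target sum, so the two formulations have the same optimal value and the same optimal $x$. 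I would state this as the first step and justify it in one or two sentences.

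Next I would handle the constraint set carried over from (\ref{eqn:variable_x}): the packing constraints $\sum_{m\in P^i} x_m^i \le 1$ for all $i\in R$ are kept verbatim. The only substantive modeling change is the \emph{relaxation}: in the original SATA problem $x_m^i\in\{0,1\}$, whereas here we replace this by $x_m^i \ge 0$. I would note that the upper bound $x_m^i \le 1$ is implied by the packing constraint together with nonnegativity (since all other terms in the sum are $\ge 0$), so it is redundant and can be dropped; hence the relaxed feasible region is exactly $\{x \ge 0 : \sum_{m\in P^i} x_m^i \le 1\}$, which is what appears in (\ref{eqn:mpcp}). I would also remark that the weights $c_{i,m}^j \ge 0$ (they are inverse distances or binary indicators, per the problem setup) so the covering constraints are genuine ``$\ge w$'' covering-type constraints, matching the lower two layers of the graph $\mathcal{G}$.

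Putting the pieces together: the objective ``maximize $w$'', the packing constraints over robots, the covering constraints over targets, and nonnegativity of $x$ are exactly the lines of (\ref{eqn:mpcp}), so the linear relaxation of the \Bottleneck{} ILP coincides with the displayed max-min LP. This is the form required as input to the local algorithm of Flor\'een~\etal~\cite{floreen2011local}, which is what the next subsection invokes. I do not expect any real obstacle here — the lemma is essentially a reformulation/bookkeeping statement — but the one point that needs care, and which I would make explicit, is arguing that the epigraph reformulation preserves optimality in both directions (that no optimal solution is lost and no spurious one is gained when we pass to the $(x,w)$ space), and confirming that dropping the $x_m^i \le 1$ bounds is legitimate in the relaxed program so that the feasible polytope matches (\ref{eqn:mpcp}) on the nose.
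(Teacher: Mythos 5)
Your proposal is correct and follows essentially the same route as the paper's own proof: the standard epigraph reformulation replacing the inner minimum over targets by an auxiliary variable $w$ with per-target covering constraints, combined with relaxing $x_m^i\in\{0,1\}$ to $x_m^i\geq 0$ under the packing constraints $\sum_{m\in P^i}x_m^i\leq 1$. Your added remark that the upper bounds $x_m^i\leq 1$ are implied by nonnegativity and the packing constraints is a harmless bookkeeping point not spelled out in the paper, but the substance of the argument is identical.
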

The proof is given in Appendix~\ref{append:equiv_bottleneck}.

\begin{figure}[thpb]
\centering
\includegraphics[scale=0.38]{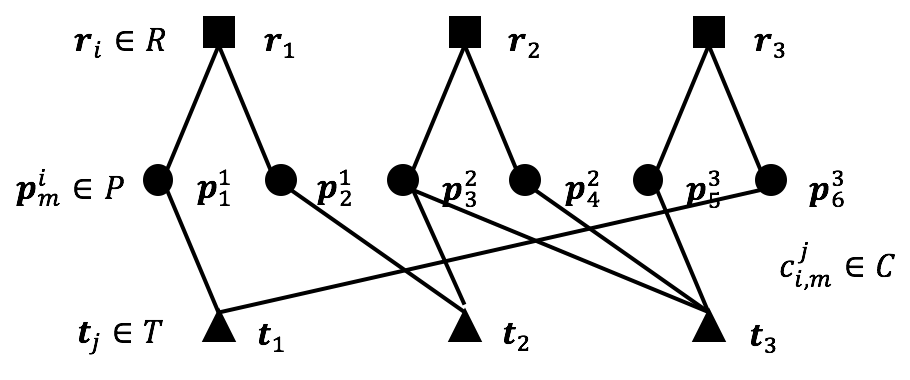}
\caption{One instance of a graph for MPCP when there are three robot nodes, six motion primitive nodes and three target nodes.}
\label{fig:general}
\end{figure}

Flor\'een~\etal~\cite{floreen2011local} presented a local algorithm to solve MPCP in Equation (\ref{eqn:mpcp}) in a distributed fashion. They presented both positive and negative results for MPCP. We show how to adopt this algorithm for solving the \Bottleneck~version of SATA.

\begin{figure}[thpb]
\centering
\includegraphics[scale=0.48]{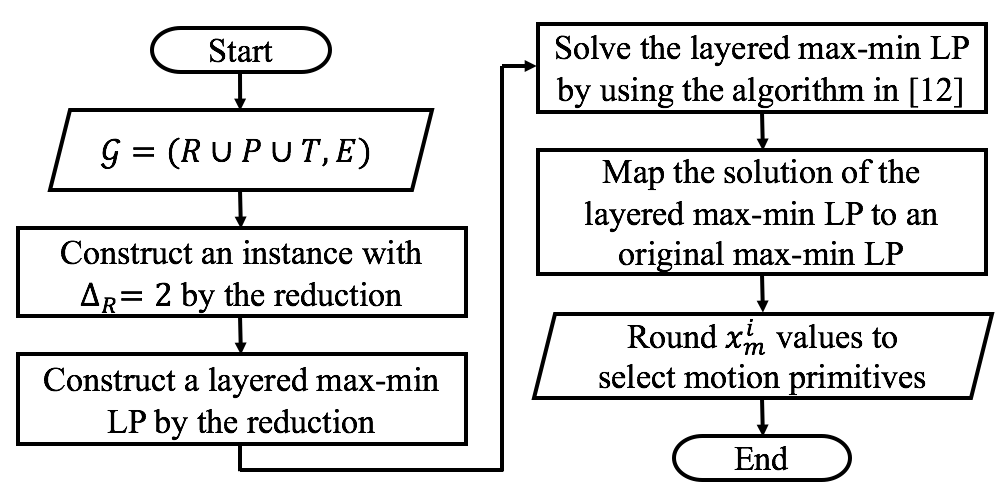}
\caption{Flowchart of the proposed local algorithm.}
\label{fig:flowchart}
\end{figure}

An overview of our algorithm is given in Figure~\ref{fig:flowchart}. We describe the main steps in the following.

\subsubsection{Local Algorithm from Reference~\cite{floreen2011local}}\label{subsubsec:local}
The local algorithm in Reference~\cite{floreen2011local} requires~$\bigtriangleup_R=2$. However, they also present a simple local technique to split nodes in the original graph with $\bigtriangleup_R > 2$ into multiple nodes making $\bigtriangleup_R = 2$. Then, a \emph{layered} max-min LP is constructed with $h$ layers, as shown in Figure~\ref{fig:layered}. $h$ is a user-defined parameter that allows to trade-off computational time with optimality. If the number of layers is set to $h$, then it means that a robot can communicate with another robot that is no more than $h$ communication edges (\ie, hops) away. The layered graph breaks the symmetry that inherently exists in the original graph. This layered mechanism is specifically designed for solving MPCP and is covered in depth in Section 4 of Reference~\cite{floreen2011local}. We omit the details in this paper due to limited space and redirect the readers to Section 4 of Reference~\cite{floreen2011local} for the construction of the layered graph.

\begin{figure}[thpb]
\centering
\includegraphics[scale=0.50]{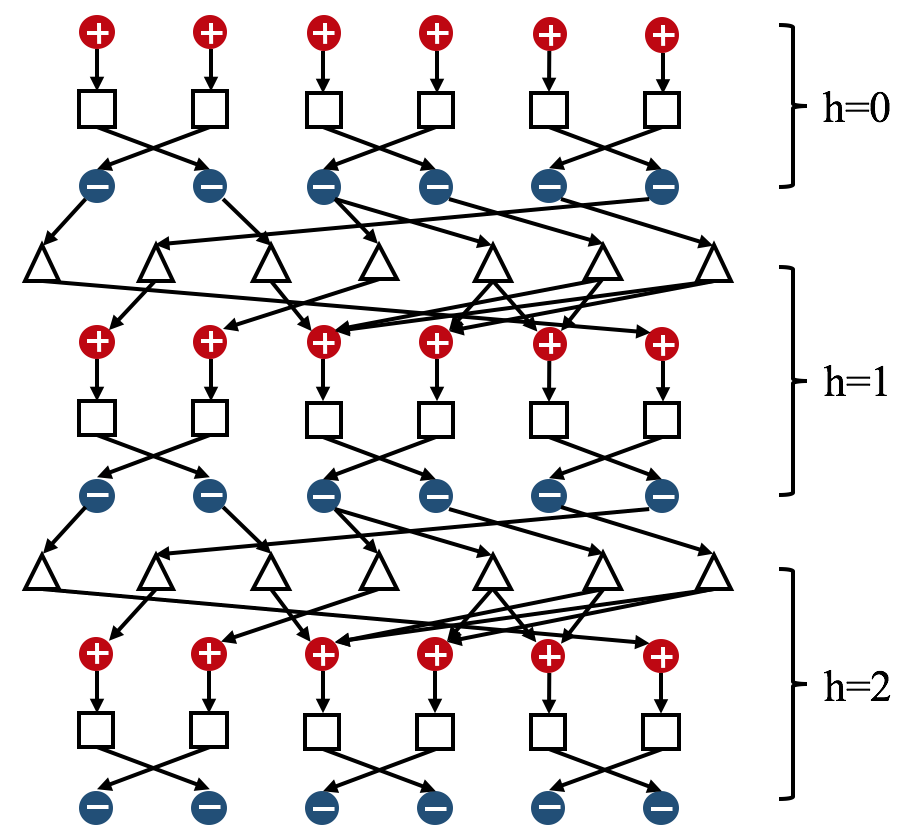}
\caption{Graph of the layered max-min LP with $h=2$ that is obtained from the original graph of Figure~\ref{fig:general} after applying the local algorithm. The details for constructing a layered graph are given in Section 4 of Reference~\cite{floreen2011local}. Each motion primitive $\textbf{p}_m^i\in{p^i}$ is colored either red or blue to break the symmetry of the original graph. Squares, circles, and triangles represent robot nodes, motion primitive nodes, and target nodes, respectively, corresponding to Figure~\ref{fig:general}.
}
\label{fig:layered}
\end{figure}

They proposed a recursive algorithm to compute a solution of the layered max-min LP. The solution for the original max-min LP can be obtained by mapping from the solution of the layered one. The obtained solution corresponds to values of $x_m^i$. They proved that the resulting algorithm gives a constant-factor approximation ratio.
\begin{theorem}
There exists local approximation algorithms for max-min and min-max LPs with the approximation ratio $\bigtriangleup_R(1+\epsilon)(1+1/h)(1-1/\bigtriangleup_T)$ for any $\bigtriangleup_R\geq2$, $\bigtriangleup_T\geq2$, and $\epsilon>0$, where $h$ denotes the number of layers.
\end{theorem}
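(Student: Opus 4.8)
The plan is to establish both the algorithm and its guarantee through the layered-graph construction, by reducing the max-min (and, dually, min-max) LP to a highly structured instance that a constant-round local algorithm can solve exactly, and then transferring the solution back. First I would handle degree reduction: any instance with $\bigtriangleup_R > 2$ is transformed into one with $\bigtriangleup_R = 2$ by splitting each robot node of degree $d$ into a chain of auxiliary degree-2 nodes that together share the packing budget $\sum_{m} x_{m}^{i} \leq 1$. I would verify that this splitting is purely local (each node's transformation depends only on its own incident edges), that it preserves the optimal LP value, and that any feasible solution on the split instance maps back to a feasible solution on the original instance. This justifies assuming $\bigtriangleup_R = 2$ throughout the rest of the argument.

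Next I would construct the $h$-layered graph by unrolling the radius-$h$ neighborhood of each node and assigning the red/blue coloring to primitive nodes so that the symmetry inherent in the original graph is broken. The central claim is that the resulting layered max-min LP admits an optimal solution computed by a recursive procedure that terminates at the layer-$h$ boundary. I would prove this by induction on the layers: the recursion defines the variables at each layer in terms of the variables already computed at the adjacent layer, and the coloring guarantees that the recursion is well-posed and that each node's computation depends only on information reachable within $h$ hops. Collecting the per-layer contributions then yields values $x_{m}^{i}$ for the original variables, and I would check feasibility of the packing and covering constraints back in the original LP.

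The main work, and the main obstacle, is bounding the approximation ratio by $\bigtriangleup_R(1+\epsilon)(1+1/h)(1-1/\bigtriangleup_T)$, and I would isolate each factor separately so they do not compound. The factor $(1+\epsilon)$ comes from solving the layered LP only to within relative accuracy $\epsilon$ via a binary search on the objective value $w$, which is exactly what produces the $\log 1/\epsilon$ term in the $\mathcal{O}(h\log 1/\epsilon)$ round complexity. The factor $(1+1/h)$ measures the loss from truncating the unrolling at $h$ layers: I would compare the value recovered from the truncated layered solution against the exact layered optimum and show that the boundary layer contributes at most a $1/h$ fraction of the total. The packing penalty $\bigtriangleup_R$ and the covering-side gain $(1-1/\bigtriangleup_T)$ both arise from a local averaging argument over the degree-bounded neighborhoods, relating the fractional covering attainable at each target node to the packing budget spread across the robots observing it. The delicate step is keeping these four contributions cleanly separated through a shared family of averaging inequalities rather than letting them multiply uncontrollably.

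Finally, I would argue tightness, so that the stated ratio is not merely an upper bound: I would exhibit a family of regular instances on which the radius-$h$ neighborhoods of distinct nodes are indistinguishable, forcing any local algorithm to output a symmetric and hence suboptimal solution that matches the bound in the limit $\bigtriangleup_R(1-1/\bigtriangleup_T)$ as $h\to\infty$ and $\epsilon\to 0$. The min-max case then follows from the standard substitution that converts a min-max instance into a max-min instance with the same degree parameters $\bigtriangleup_R$ and $\bigtriangleup_T$, so the identical ratio and round complexity carry over without further analysis.
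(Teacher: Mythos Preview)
The paper does not prove this theorem at all: its entire proof is the single line ``Please refer to Corollary 4.7 from Reference~\cite{floreen2011local} for a proof.'' The statement is quoted verbatim from Flor\'een~\etal, and the authors simply invoke it as a black box before explaining how to realize the graph nodes on robots. So your proposal is not a different route to the paper's proof; it is an attempt to reconstruct the proof of the cited corollary itself, which the present paper never undertakes.

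As for the reconstruction, the high-level architecture you describe (local degree reduction to $\bigtriangleup_R=2$, unrolling into an $h$-layered instance with a two-coloring, a recursive exact solution on the layered LP, and mapping back) does match the structure of Flor\'een~\etal's argument. One point is off, though: you assert that the node-splitting step ``preserves the optimal LP value'' and then attribute the $\bigtriangleup_R$ factor to a separate averaging argument. In fact the splitting relaxes each packing constraint $\sum_{m\in P^i} x_m^i \le 1$ into several two-term constraints whose conjunction is strictly weaker, so the reduced instance can have a larger optimum; the $\bigtriangleup_R$ factor is precisely the price paid when you map a feasible solution of the degree-$2$ instance back to a feasible solution of the original. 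If you want your sketch to be self-contained, that is where the $\bigtriangleup_R$ multiplier should be accounted for, not in the downstream averaging.
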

\begin{proof}
Please refer to Corollary 4.7 from Reference~\cite{floreen2011local} for a proof.
\end{proof}
Note that each node in the layered graph carries out its local computation (details of the local computation for solving SATA are included in Reference~\cite{floreen2011local}). Each node also receives and sends information from and to neighbors at each synchronous communication round. Constructing the layered graph is done in a local fashion without requiring any single robot to know the entire graph.

\subsubsection{Realization of Local Algorithm for SATA} \label{subsubsec:realize}

To apply the local algorithm of Section~\ref{subsubsec:local} to a distributed SATA problem, each node and edge in a layered graph must be realized at each time step (\ie, generating a graph shown in Figure~\ref{fig:layered} which becomes the input to the local algorithm~\cite{floreen2011local}). In our case, the only computational units are the robots. Nodes that correspond to motion primitives, $\textbf{p}_m^i\in{p^i}$, can be realized by the corresponding robot $\textbf{r}_i\in{r}$. Moreover, nodes corresponding to the targets must also be realized by robots. A target $j$ is realized by a robot $i$ satisfying $\textbf{t}_{j}\in \mathcal{RS}(\textbf{p}_{m}^i)$. If there are multiple robots whose motion primitives can sense the target (by Assumption~\ref{assum:range}), they can arbitrarily decide which amongst them realizes the target nodes in a constant number of communication rounds. 


After applying the local algorithm of Section~\ref{subsubsec:local} to robots, each robot obtains $x_{m}^{i}$ on corresponding $\textbf{p}_m^i$ at each time. However, due to the LP relaxation, $x_{m}^{i}$ will not necessarily be binary, as in Equation (\ref{eqn:variable_x}). For each robot we set the highest $x_{m}^{i}$ equal to one and all others as zero. We shortly show that the resulting solution after rounding is still close to optimal in practice. Furthermore, increasing the parameter $h$ finds solutions that are close to binary. 

The following pseudo-code explains the overall scheme of each robot for a distributed SATA. We solve the SATA problem at each time step. 


\begin{algorithm} \label{alg:local}
    \SetKwInOut{Input}{Input}
    \SetKwInOut{Output}{Output}
    \For{$\textbf{r}_{i}(k)\in r(k)$}
    {
      $p^i(k)\leftarrow$ComputeMotionPrimitives($\textbf{r}_{i}(k)$).
      
      Find targets that can be sensed by $m$-th motion primitive of $i$-th robot ($\textbf{p}_{m}^i(k)$).
      
      Construct a $h$-hop communication graph.
      
      Apply local algorithm~\cite{floreen2011local}.
      
      $\hat{x}_{m}^{i}\leftarrow$ Rounding\big($x_{m}^{i}$\big).
      
      $\textbf{p}_m^{i*}(k)\leftarrow$ Motion primitive with $\hat{x}_{m}^{i} = 1$.
      
      ApplyAction\big($\textbf{p}_m^{i*}(k)$\big).
      
      $k\leftarrow k+1$.
      
    }
    \caption{Local algorithm}
\end{algorithm}

\subsubsection{Advantages of the Local Algorithm} \label{subsubsec:observation}

It is possible that there are some robots that are isolated from the others. That is, the communication graph or the layered graph may be disconnected. However, each component of the graph can run the local algorithm independently without affecting the solution quality. Furthermore, if a robot is disconnected from the rest, then it can take a greedy approach as described in Reference~\cite{tokekar2014multi} before they reach any other robots to communicate.

The algorithm also allows for the number of robots and targets to change over time. Since each robot determines its neighbors at each time step, any new robots or targets will be identified and become part of the time-varying local layered graphs. The robots can also be anonymous (as long as they can break the symmetry to determine which robot, amongst a set, will realize the target node, when multiple robots can observe the same target).

The number of layers, $h$, directly affects the solution quality and can be set by the user. Increasing $h$ results in better solutions at the expense of more communication. $h=0$ is equivalent to the greedy approach where no robots communicate with each other. 


\begin{table}[h]
\centering
\begin{center}
\begin{tabular}{ c c c c c } 
 \hline
 \hline \\[-1em]
 $\textbf{p}_m^i $ & $x_{m}^{i}$ & $h = 2$\ & $h = 10$ &
$h = 30$ \\ \\[-1em]
 \hline
 \hline \\[-1em]
 $\textbf{p}_1^1$ & $x_1^1 = $ & 0.5000 & 0.5000 & 0.5000 \\ \\ \\[-2em]

 $\textbf{p}_2^1$ & $x_2^1 = $ & 0.5000 & 0.5000 & 0.5000 \\ \\[-1em]
 \hline \\[-1em]
 $\textbf{p}_3^2$ & $x_3^2 = $ & 0.6667 & 0.7591 & 0.7855 \\ \\ \\[-2em]

 $\textbf{p}_4^2$ & $x_4^2 = $ & 0.3333 & 0.2409 & 0.2145 \\ \\[-1em]
 \hline \\[-1em]
 $\textbf{p}_5^3$ & $x_5^3 = $ & 0.3333 & 0.2409 & 0.2145 \\ \\ \\[-2em]

 $\textbf{p}_6^3$ & $x_6^3 = $ & 0.6667 & 0.7591 & 0.7855 \\ \\[-1em]
 \hline
 \hline
\end{tabular}
\end{center}
\caption{Solution returned by the local algorithm for the example shown in Figure~\ref{fig:general}, with all edges' weights set to $1$, as a function of $h$.}
\label{table:1}
\end{table}

The above table shows the result of applying the local algorithm to the graph in Figure~\ref{fig:general} when all edge weights were set to $1$. Three different values for $h$ were tested: $2$, $10$, and $30$. In all cases, $\textbf{p}_3^2$ and $\textbf{p}_6^3$ have larger values of $x_p$ than other nodes. Thus, the robot 2 ($\textbf{r}_2$) and the robot 3 ($\textbf{r}_3$) will select the motion primitive $3$ ($\textbf{p}_3^2$) and the motion primitive $6$ ($\textbf{p}_6^3$), respectively, after employing a rounding technique to $x_p$'s.

As the number of layers increases, the more distinct the $x_p^i$ values returned by the algorithm. Another interesting observation is that robot $1$ has the same equal value on both motion primitives of its own no matter how many number of layers are used. This is because all the targets are already observed by robots $2$ and $3$ with higher values.

\subsection{Greedy Algorithm} \label{subsec:greedy}

The greedy algorithm requires a specific ordering of the robots given in advance. The first robot greedily chooses a motion primitive that can maximize the number of targets being observed. Those observed targets are removed from the consideration. Then, the second robot makes its choice; this repeats for the rest of robots. If the communication graph is disconnected and forms more than one connected component, the greedy algorithm can independently be applied to each connected component without modifying the algorithm.
Note again that the greedy algorithm is for the \WinnerTakesAll{} version of SATA. 

\begin{algorithm} \label{alg:greedy}
    \SetKwInOut{Input}{Input}
    \SetKwInOut{Output}{Output}
    \Input{Order of robots $R$.}
    
    Initialize $w(\textbf{t}_j)=0\ \forall j\in T$.

\For{$i\in R$}
{
\For{$m\in P^i$}
{
Compute $c_{i,m}^j$$\ \forall j\in T$.

$w^\prime(\textbf{p}_{m}^i)=\sum_{j}\max\{w(\textbf{t}_{j}),c_{i,m}^j\}$.
}

Determine $x_m^i=\argmax w^\prime(\textbf{p}_m^i)\ \forall m\in P^i$.

Update $w(\textbf{t}_j)=\max\{w(\textbf{t}_j),c_{i,m}^j\}\ \forall j\in T$.

}
$y_i^j\leftarrow0\ \forall i\in R,\ j\in T$.

\For{$j\in T$}
{
$\textbf{r}_i^*\leftarrow\argmax_{i\in R}\sum_{m}c_{i,m}^jx_m^i$.

$y_{i^*}^j\leftarrow 1$.
}
    \caption{Greedy algorithm}
\end{algorithm}

As shown in Algorithm~\ref{alg:greedy}, the greedy algorithm runs in $|R|$ communication rounds at each time step. We define two additional functions: $w(\textbf{t}_j)$ gives a quality of tracking for $j$-th target; and $w^\prime(\textbf{p}_m^i)$ gives the sum of quality of tracking over all feasible targets using $m$-th motion primitive of $i$-th robot. If, for example, $c_{i,m}^j$ is used as a distance metric, the $\max$ ensures that the quality of tracking for $j$-th target is only given by the distance of the nearest robot/primitive. That is, even if multiple primitives can track the same target $j$, when counting the quality we only care about the closest one. The total quality will then be the sum of qualities for each target.

The objective in Line 5 in Algorithm~\ref{alg:greedy} appears, at first sight, to be different than that given in  Equation (\ref{eqn:objective}). The following lemma, however, proves that the two objectives are equivalent.

\begin{lemma}~\label{lemma:greedy}
Greedy algorithm of Algorithm~\ref{alg:greedy} gives a feasible solution for the \WinnerTakesAll{} version of SA- TA.
\end{lemma}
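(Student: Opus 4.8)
The statement as written asserts that the pair $(x_m^i,y_i^j)$ returned by Algorithm~\ref{alg:greedy} lies in the feasible region of Problem~\ref{prob:winnertakesall}; I would use the same proof to also discharge the claim in the paragraph preceding the lemma, namely that the surrogate quantity maximized in Line~5 is consistent with the objective of Equation~(\ref{eqn:objective}). The plan is therefore two-fold: (a) read the two packing-type constraints directly off the pseudocode, and (b) establish a loop invariant that identifies the bookkeeping variable $w(\textbf{t}_j)$ with the contribution of target $j$ to the true objective.

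For feasibility I would argue constraint by constraint. In the first \texttt{for} loop each robot $i$ performs the selection step once, setting $x_m^i=1$ for a single maximizer $m\in P^i$ of $w'(\textbf{p}_m^i)$ (ties broken by any fixed rule) and leaving the remaining $x_{m'}^i=0$; hence $\sum_{m\in P^i}x_m^i\le 1$, which is Equation~(\ref{eqn:variable_x}) (and it holds trivially if $P^i=\emptyset$). The step $y_i^j\leftarrow 0$ initializes all the $y$-variables, and the final \texttt{for} loop iterates once per target $j$, setting $y_{i^*}^j=1$ for the single robot $i^*=\argmax_{i\in R}\sum_{m}c_{i,m}^j x_m^i$; hence $\sum_{i\in R}y_i^j\le 1$, which is Equation~(\ref{eqn:variable_y}). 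All assigned values lie in $\{0,1\}$, and $x,y$ are exactly the decision variables of Problem~\ref{prob:winnertakesall}, so the output is feasible.

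For the objective, write $m_i$ for the primitive chosen by robot $i$, so $x_m^i=\mathbf{1}[m=m_i]$. I would prove by induction on the loop counter that, immediately after robot $i$ performs the update step, $w(\textbf{t}_j)=\max_{i'\le i}c_{i',m_{i'}}^j$ for every $j\in T$, using the initialization $w(\textbf{t}_j)=0$ together with the convention that an empty maximum equals $0$; the inductive step is precisely the update rule $w(\textbf{t}_j)=\max\{w(\textbf{t}_j),c_{i,m_i}^j\}$. At termination this yields $w(\textbf{t}_j)=\max_{i\in R}\sum_{m\in P^i}c_{i,m}^j x_m^i$. Since the final loop assigns target $j$ to a robot attaining this maximum, the inner double sum of Equation~(\ref{eqn:objective}) collapses, for each $j$, to $w(\textbf{t}_j)$, and summing over $j\in T$ shows that the objective value of the returned solution equals $\sum_{j\in T}w(\textbf{t}_j)$. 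Moreover, at the moment robot $i$ evaluates Line~5, $w'(\textbf{p}_m^i)=\sum_{j}\max\{w(\textbf{t}_j),c_{i,m}^j\}$ is exactly the value this objective would attain if robots $1,\dots,i-1$ kept their choices, robot $i$ took primitive $m$, and robots $i+1,\dots,|R|$ contributed nothing; thus Line~5 is a genuine greedy step on Equation~(\ref{eqn:objective}), which is the intended meaning of the two objectives being equivalent.

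The delicate point is the objective-equivalence part: one must verify that the running maximum $w(\textbf{t}_j)$ faithfully represents target $j$'s contribution at every intermediate stage and, in particular, that the later $y$-assignment step does not alter this contribution but merely records which robot realizes the maximum. Once the invariant $w(\textbf{t}_j)=\max_{i'\le i}c_{i',m_{i'}}^j$ is established, the remaining identities are a direct substitution, and the feasibility half is immediate from the structure of the pseudocode.
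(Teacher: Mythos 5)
Your proposal is correct and follows essentially the same route as the paper: the paper's proof also identifies $w(\textbf{t}_j)$ with $\max\{c_{i,m}^j \mid x_m^i=1\}$ and argues that, because the final loop sets $y_{i^*}^j=1$ for the robot attaining this maximum, $\sum_{j\in T}w(\textbf{t}_j)$ coincides with the objective of Equation~(\ref{eqn:objective}). Your additions (the explicit loop invariant for the running $w(\textbf{t}_j)$ and the constraint-by-constraint feasibility check of Equations~(\ref{eqn:variable_x}) and~(\ref{eqn:variable_y})) simply make explicit what the paper leaves implicit, so no substantive difference remains.
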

The proof is given in Appendix~\ref{append:greedy}. Since the objective in Line 5 in Algorithm~\ref{alg:greedy} is submodular, the resulting algorithm yields a $2$--approximation to \WinnerTakesAllFirst{}- \WinnerTakesAllSecond{} \cite{nemhauser1978analysis}.

The greedy algorithm can perform arbitrarily worse than the optimal solution if it is applied to the \BottleneckFirst{}- \BottleneckSecond{} version of the problem. In Appendix~\ref{append:greedy_bottleneck}, we show an example where the greedy yields an arbitrarily bad solution for the \Bottleneck{} version.

A centralized-equivalent approach is one where the robots all broadcast their local information until some robot has received information from all others. This robot can obtain a centralized solution to the problem. A centralized-equivalent approach for a complete $\mathcal{G}_C$ runs in 2 communication rounds for receiving and sending data to neighbors. However, the greedy algorithm and local algorithm have $|R|$ and $h\log(1/\epsilon)$ communication rounds, respectively, for a complete $\mathcal{G}_C$. Note that $h\ll|R|$ for most practical cases. 


\section{Simulations} \label{sec:sim}

We carried out four types of simulations to verify the efficacy of the proposed algorithms under the condition that the amount of time required for communication is limited. First, we compare the performance of the greedy and local algorithms with centralized, optimal solutions. Second, we study the effect of varying the parameters (\ie, the number of layers) for the local algorithm. Third, we describe how to implement the algorithms for sequential planning over multiple horizons and evaluate their performance over time. Last, we compare the greedy algorithm with a state-of-the-art distributed tracking algorithm.




\subsection{Comparisons with Centralized Solutions}

We performed comparison studies to verify the performance of the proposed algorithms. We compared the greedy solution with the optimal, centralized QMILP solution as well as a random algorithm as a baseline for the \WinnerTakesAll~version. We compared the local algorithm's solution with the optimal ILP solution as well as the LP with rounding for \Bottleneck{}. For these comparisons, we assumed that there are only two primitives to choose from (making the random algorithm a powerful baseline). We later analyzed the algorithms with more primitives.
We used TOMLAB~\cite{QMILP} to solve the QMILP and ILP problems. The toolbox works with MATLAB and uses IBM's CPLEX optimizer in the background. On a laptop with processor configuration of Intel® Core™ i7-5500U CPU @ 2.40GHz~x~4 and 16 GB of memory the maximum time to solve was around $3$ seconds on a case with 150 targets. Most of our cases were solved in less than $2$ seconds.

We randomly generated graphs similar to Figure~\ref{fig:general} for the comparison. To control the topology of the randomly generated graphs, we defined $\phi:\mathcal{G}_S\rightarrow\mathbb{R}$ to be the percentage of targets that are detected by a motion primitive. We denote the average degree of edges by $d_{avg}(\cdot)$. Therefore:
\begin{equation}
\phi(\mathcal{G}_S):=\frac{d_{avg}(T)}{\sum_{i=1}^{|R|}|P^i|}\times 100=\frac{|E_S|}{\sum_{i=1}^{|R|}|P^i||T|}\times 100.
\label{eqn:avg_target}
\end{equation}

We started with the upper half of the graph, connecting each robot to its two motion primitives. Then, we iterated through each of motion primitive and randomly chose a target node to create an edge. Next, we iterated through target nodes and randomly chose a motion primitive to create an edge. We also added random edges to connect disconnected components (to keep the implementation simpler). We repeated this in order to get the required graph. If we needed to increase the degree of target nodes in the graph, we created new edges to random primitives till we achieved the desired $\phi(\mathcal{G}_S)$. We generated cases by varying $\phi(\mathcal{G}_S)$, number of targets, and number of robots using the method described above. Here, the tracking quality was defined as the number of targets, \ie, $c_{i,m}^j\in\{0,1\}$ for all cases.

\begin{figure*}
\centering
\includegraphics[width=\textwidth]{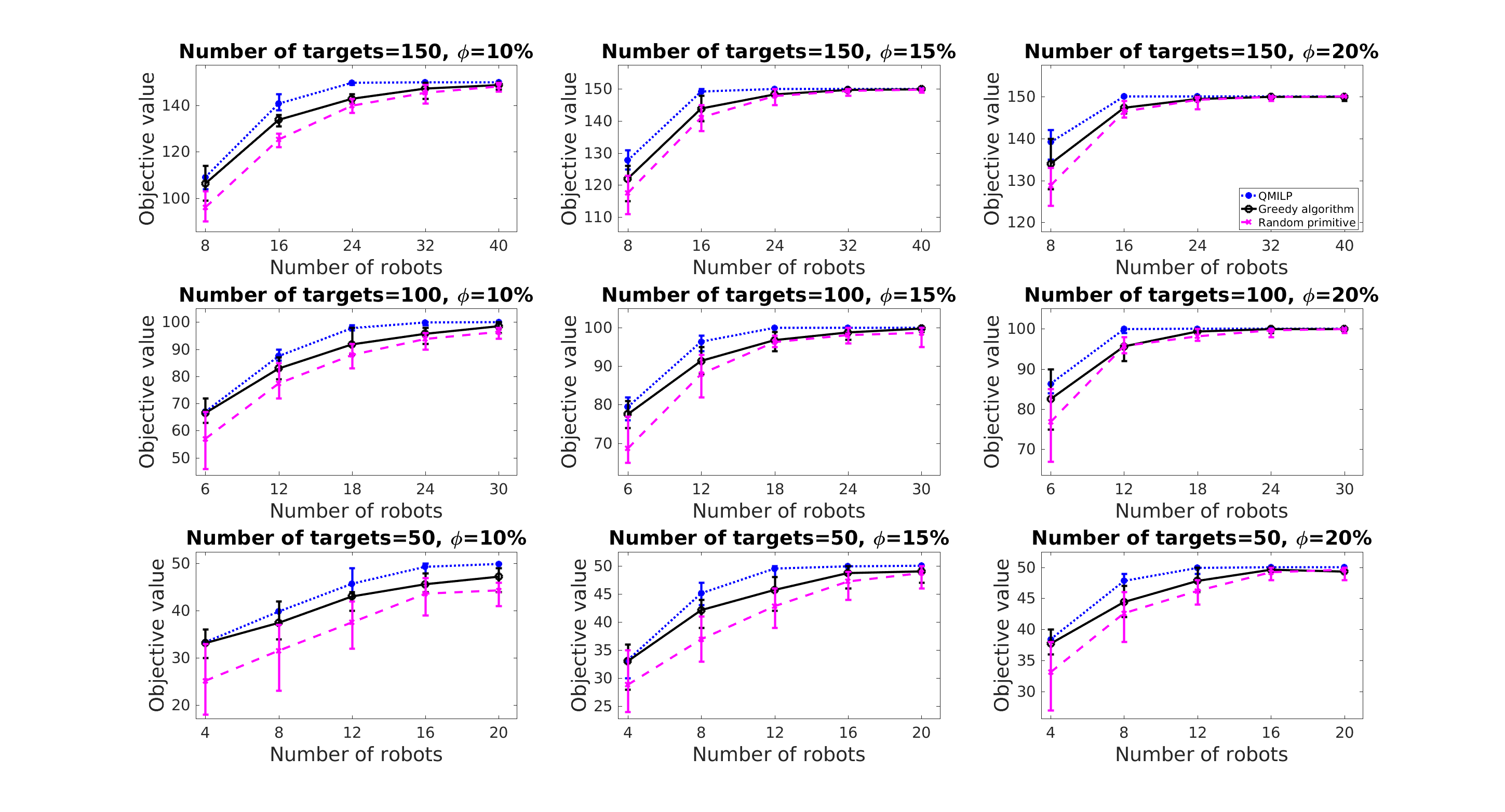}
\caption{Showing the comparative results of QMILP, greedy algorithm, and randomly choosing a motion primitive for \WinnerTakesAll. To generate the graphs, we varied number of robots, total number of targets, and $\phi(\mathcal{G}_S)$. We ran 100 trials for each case.\label{fig:cmp_plot}}
\end{figure*}

The comparative simulation results for \WinnerTakesAllFirst{}- \WinnerTakesAllSecond{} are presented in Figure~\ref{fig:cmp_plot}. The plots show minimum, maximum, and average of the targets covered by the greedy algorithm and QMILP running 100 random instances for every setting of the parameters. We also show the number of targets covered when choosing motion primitives randomly as a baseline. We observe that the greedy algorithm performs comparatively to the optimal algorithm, and is always better than the baseline. In all the figures, $\Delta_R=2$, making random a relatively powerful baseline. The difference between the greedy algorithm and the baseline becomes smaller as $\phi(\mathcal{G}_S)$ increases. This could be because of the fact that the baseline saturates at the maximum objective value with fewer robots as $\phi(\mathcal{G}_S)$ increases. As $\phi(\mathcal{G}_S)$, number of targets, and number of robots increase, the performance of the greedy algorithm also improves.


\begin{figure*}
\centering
\includegraphics[width=\textwidth]{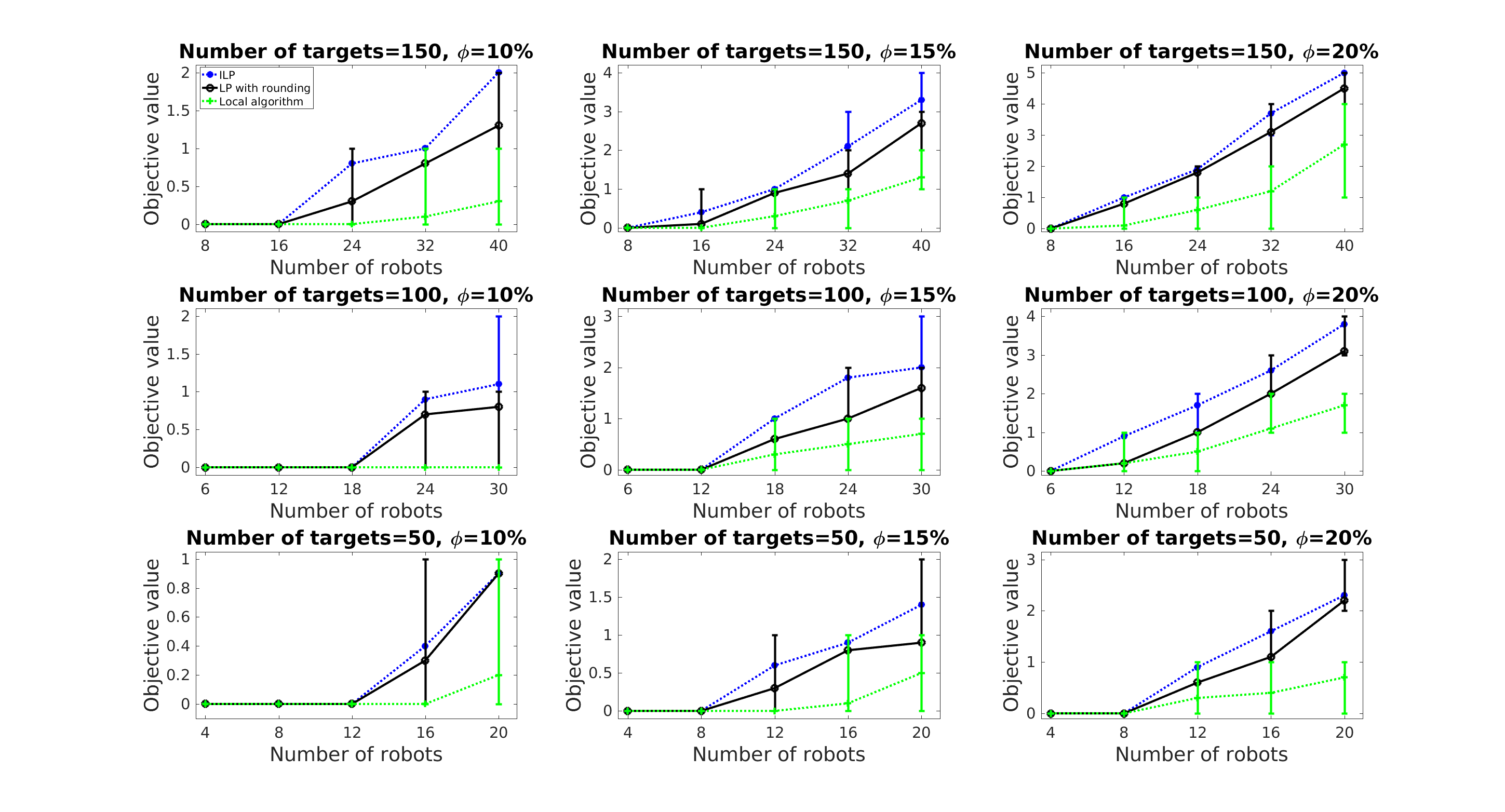}
\caption{Comparison simulation for the \Bottleneck~version of the ILP, LP with rounding, local algorithm and randomly choosing a motion primitive. We set $h$ to 2 in the local algorithm, for all cases. Each case was obtained from 100 trials.
}\label{fig:cmp_plot_bottleneck}
\end{figure*}

Figure~\ref{fig:cmp_plot_bottleneck} shows the comparison results for \BottleneckFirst{}- \BottleneckSecond{} where the objective values were computed from the $w$ term of Equation (\ref{eqn:mpcp}). As the proposed local algorithm is a linear relaxation of the ILP formulation, we compared the local solution with the optimal ILP solution. Note that both the ILP and LP with rounding are centralized methods. If the solution value is $0$, this means that at least one target is not covered by any selected motion primitives. The specific configuration of input motion primitives and target states is such that no matter what motion primitives are chosen, at least one target will be left uncovered. This means that the bottleneck objective (\ie, the optimal value of ILP) is $0$. 
If the mean value is larger than $0$, this implies that all targets are covered by at least one motion primitive on average. The ILP and LP with rounding outperform the local algorithm in all cases. Nevertheless, we find that the local algorithm performs comparably to the centralized methods (and far better than the theoretical bound).

\subsection{Effect of $h$ for the Local Algorithm}

\begin{figure}[htb]
\centering
\subfigure[LP with rounding.]{\includegraphics[width=0.49\columnwidth]{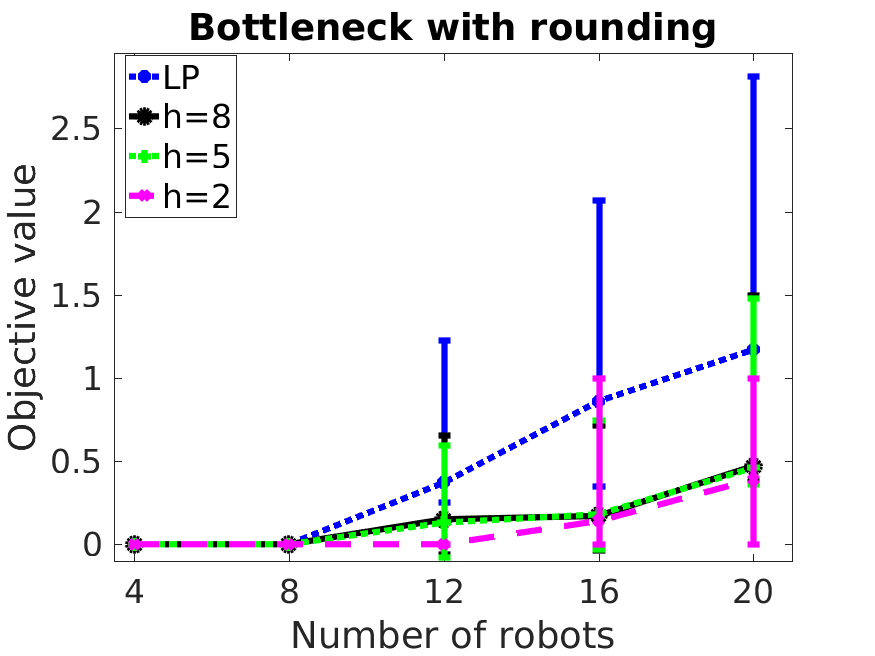}}
\subfigure[LP without rounding.]{\includegraphics[width=0.49\columnwidth]{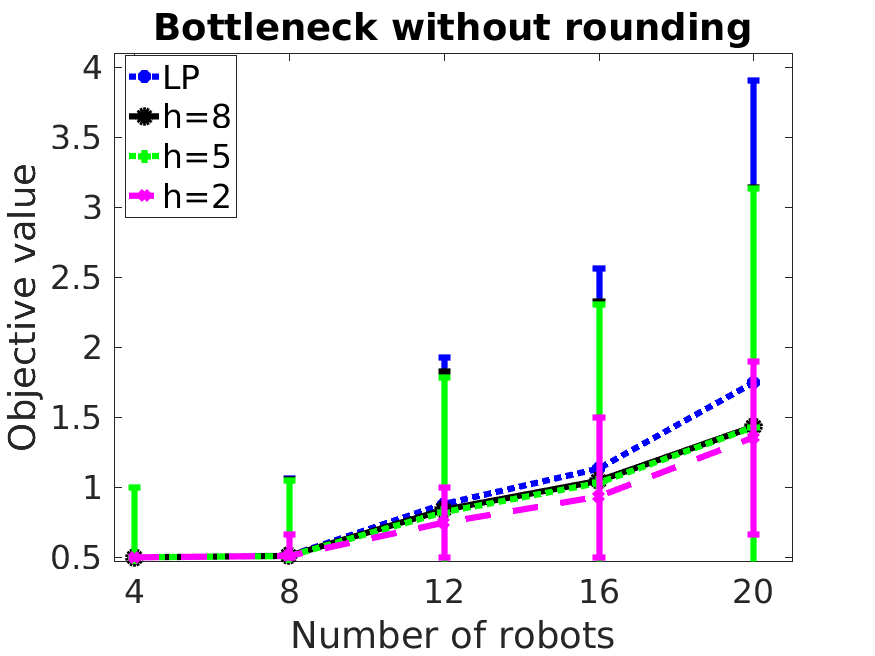}}
\caption{Analysis of varying the number of layers ($h$) for the local algorithm. The number of targets used is $50$ and $\phi(\mathcal{G}_S)=15\%$. We ran 100 trials for each case.
}
\label{fig:diff_h}
\end{figure}

We analyzed the performance of the local algorithm for different number of layers (\ie, $h$), as shown in Figure~\ref{fig:diff_h}. The LP value (without rounding) is the upper bound on the optimal solution. We observed how much the rounding sacrifices by comparing the LP with and without the rounding. In the case where $h$ was set to $5$ and $8$ for both with and without the rounding, there is no evident difference between them. This implies that $h$ should not necessarily be large as it does not contribute to the solution quality much (as also seen in Theorem~\ref{theorem:proposed_local}). In other words, the local algorithm does not require a large number of communication hops among robots, which is a powerful feature of the local algorithm.

\subsection{Multi-robot Multi-target Tracking over Time}

The greedy and local algorithms find the motion primitives to be applied over a single horizon. In order to track over time, the SATA problem will need to be solved repeatedly at each time step. In this section, we describe how to address this and other challenges associated with a practical implementation. We demonstrate a realistic scenario of cooperative multi-target tracking in the Gazebo simulator using ROS (Figure~\ref{fig:gazebo_greedy}). A bounded environment consists of dynamic targets that move in a straight line and change their heading direction randomly after a certain period. The motion model is not known to the robots.

\begin{figure}[thpb]
\centering
\includegraphics[scale=0.28]{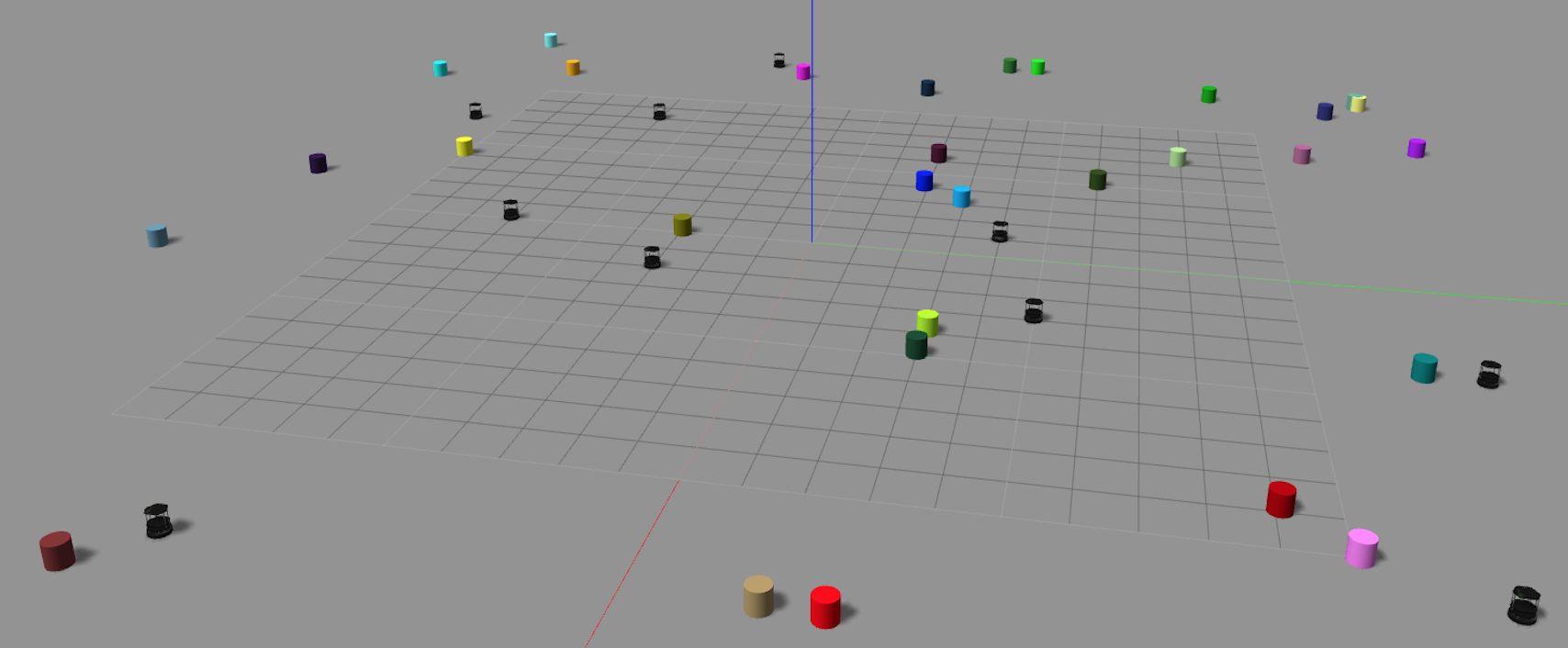}
\caption{Gazebo simulator showing ten robots tracking thirty randomly moving targets. We set the sensing and communication ranges to $5m$ and $10m$, respectively.
} 
\label{fig:gazebo_greedy}
\end{figure}

\paragraph{Greedy Algorithm.}
We implemented the greedy algorithm to solve the \WinnerTakesAll{} variant in a fully distributed fashion. There was no centralized algorithm and each robot was a separate ROS node that only had access to the local information. Each robot had its local estimator that estimated the state of targets within its FOV. We simulated proximity-limited communication range such that only robots that can see the same target can exchange messages with each other.

A sketch for the implementation of the greedy algorithm is as follows. Each robot has a local timer which is synchronized with the others. Each robot also knows its own ID which is also the order in which the sequential decisions are made. We partition the planning horizon into two periods. In the first \emph{selection} period, the robots choose their own primitives sequentially using the greedy algorithm. In the second \emph{execution} period, the robots apply their motion primitives and obtain measurements of the target.

In the selection period, a robot waits for the predecessor robots (of lower IDs) to make their selections. Every robot knows when it is its turn to select a motion primitive (since the order is fixed). Before its turn, a robot simply keeps track of the most recent $w(\textbf{t}_j)$ vector received from a predecessor robot within communication range. During its turn, the robot chooses its motion primitive using the greedy algorithm, and updates the $w(\textbf{t}_j)$ vector based on its choice. It then broadcasts this updated vector to the neighbors, and waits for the selection period to end. Then, each robot applies its selected motion primitive till the end of the horizon. The process repeats after each planning horizon. The selection period can be preceded by a sensor fusion period, where the robots can execute, for example, the covariance intersection algorithm~\cite{niehsen2002information}.

For simulations we set the selection and execution periods times to $0.2|R|s$ and $6s$, respectively, where $|R|$ is the number of robots. Each robot made its choice after $0.2s$ within the selection period. Each robot had a precomputed library of 21 motion primitives including staying in place. It should be noted that our algorithms do not require a motion primitive of stay in place. Each robot had a disk-shaped FOV. The sensing and communication ranges were set to $5m$ and $10m$, respectively. We tested both the inverse of the distance and the number of targets as tracking quality (which defines $c_{i,m}^j$).

We carried out simulations using ten robots tracking thirty moving targets, as shown in Figure~\ref{fig:gazebo_greedy}. Initial positions of robots and targets were randomly chosen in a $30\times 30m$ square environment. It may be possible that some targets were outside the FOV of any robots in the beginning.

\begin{figure}[thpb]
\centering
\includegraphics[scale=0.60]{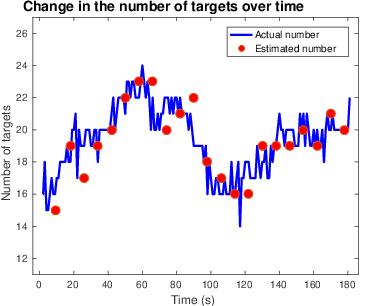}
\caption{Change in the number of targets over time when ten robots are tracking thirty moving targets.} 
\label{fig:change_num}
\end{figure}

\begin{figure}[htb]
\centering
\subfigure[Actual number of targets.]{\includegraphics[width=0.49\columnwidth]{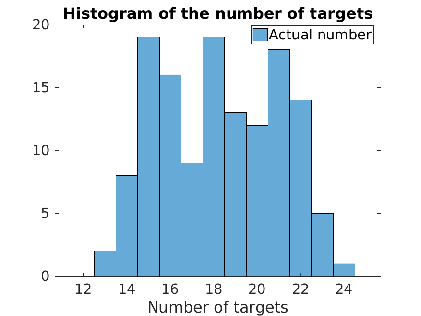}}
\subfigure[Estimated number of targets.]{\includegraphics[width=0.49\columnwidth]{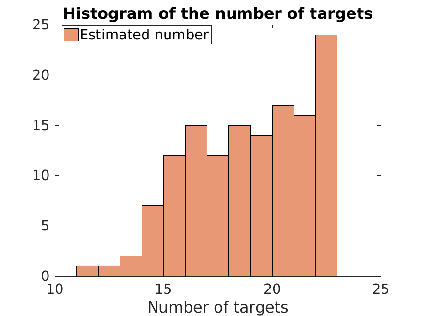}}
\caption{Histogram of the number of targets. \label{fig:histo_num}}
\end{figure}

Figure~\ref{fig:change_num} shows the change in the number of targets over time from a randomly generated instance where the objective was to track the most number of targets. We show both the estimated number of targets and the actual number of targets. The estimated number is the value of the solution found at the end of the selection period (obtained every $8s$). This is based on the predicted trajectory of the targets.\footnote{Although we model linear motion for the targets, more sophisticated models for the prediction of target states can also be employed.} The actual number of targets was found by counting the target that is within the FOV of any robots during the execution period. Figure~\ref{fig:histo_num} shows the histogram of the actual and estimated number of targets for 10 trials, each lasting three minutes.

\begin{figure}[thpb]
\centering
\includegraphics[scale=0.60]{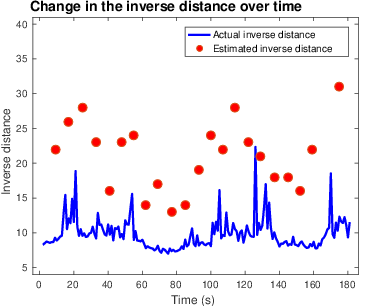}
\caption{Change in the inverse of the distance over time when ten robots are tracking thirty moving targets.} 
\label{fig:change_dist}
\end{figure}

\begin{figure}[htb]
\centering
\subfigure[Actual inverse distance.]{\includegraphics[width=0.49\columnwidth]{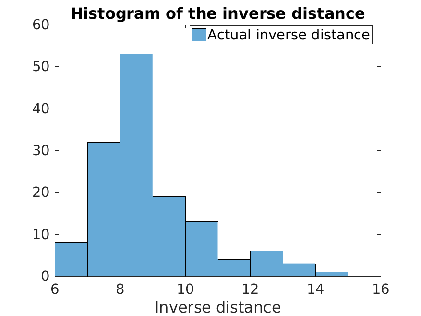}}
\subfigure[Estimated inverse distance.]{\includegraphics[width=0.49\columnwidth]{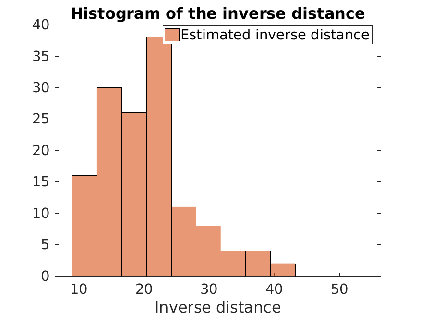}}
\caption{Histogram of the inverse distance. \label{fig:histo_dist}}
\end{figure}

Figures~\ref{fig:change_dist} and \ref{fig:histo_dist} show the corresponding plots when the objective was to maximize the total quality of tracking (inverse distance to the targets). Here, we saw that the estimated and the actual values differed much more than the previous case. We conjectured that this was due to the fact that the uncertainty in the motion model of the robots, targets, and measurements had a larger effect on the actual quality of tracking as compared to the number of targets tracked. For instance, even if the actual state of the target deviates from the predicted state, it is still likely that the target will be in the FOV. However, the actual distance between the robot and the target may be much larger than estimated.

\begin{figure}[thpb]
\centering
\includegraphics[scale=0.34]{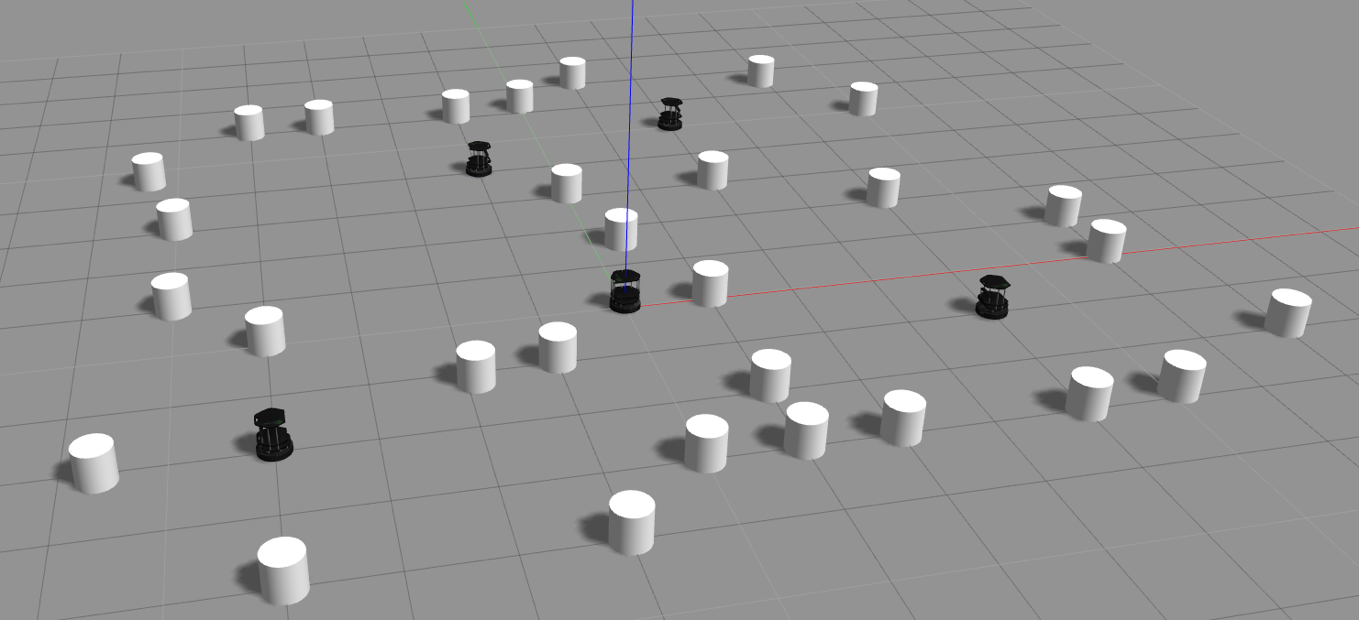}
\caption{Snapshot of the Gazebo simulator that shows when five robots are tracking thirty stationary and moving targets. The sensing and communication ranges were set to 3$m$ and $6m$, respectively.
} 
\label{fig:gazebo_local}
\end{figure}

\paragraph{Local Algorithm.} We also implemented the proposed local algorithm as shown in Figure~\ref{fig:gazebo_local}. Five mobile robots were deployed to track thirty targets (a subset of which were mobile) with a FOV of 3$m$ on the xy plane. For each robot two motion primitives were used: one was to remain in the same position and the other one was randomly generated between $-30^{\circ}$ and $30^{\circ}$ of the robot's heading traveling randomly up to $1m$. 

\begin{figure}[thpb]
\centering
\includegraphics[scale=0.60]{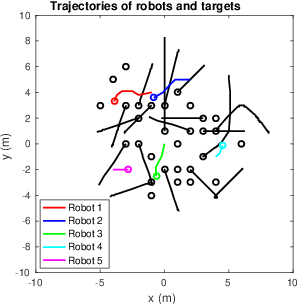}
\caption{Plot of trajectories of robots and targets applying the local algorithm to the simulation given in Figure~\ref{fig:gazebo_local}. Black lines represent trajectories of thirty targets. $\circ$ denotes the end position of trajectories. The algorithm was performed for 40 seconds.}
\label{fig:local_trajectory}
\end{figure}

\begin{figure}[htb]
\centering
\subfigure[Total number of targets observed.]{\includegraphics[width=0.49\columnwidth]{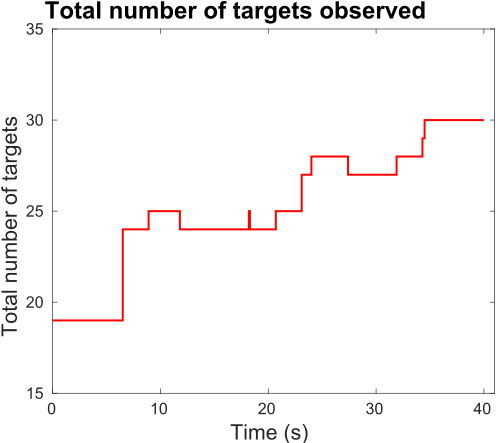}}
\subfigure[Average number of targets observed over time.]{\includegraphics[width=0.49\columnwidth]{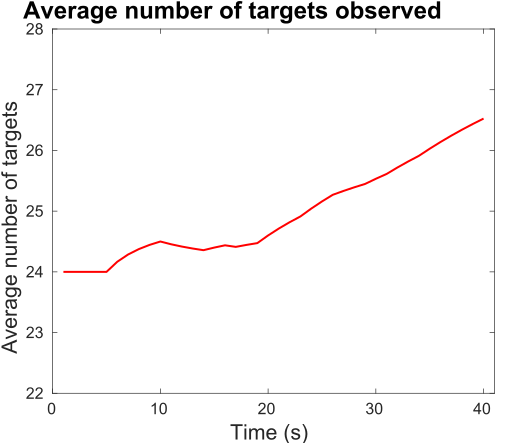}}
\caption{Change in the total and average number of targets being observed by any robots over time.}
\label{fig:total_num_local}
\end{figure}

The objective of this simulation is to show the performance of the proposed algorithm for the \Bottleneck{} version. At each time step, the local algorithm was employed to choose motion primitives that maximize the minimum number of targets being observed by any robots.


Figure~\ref{fig:local_trajectory} shows the resultant trajectories of robots and targets obtained from the simulation. Figure~\ref{fig:total_num_local} presents the (total/average) number of targets tracked by the local algorithm for a specific instance. Although the local algorithm has a sub-optimal performance guarantee, we observe that in practice, it performs comparably to the optimal path.

\subsection{Comparison of the Greedy Algorithm with Other CMOMMT Algorithm}

We compared the greedy algorithm with an algorithm proposed by Parker~\cite{parker2002distributed} following the CMOMMT approach. This algorithm addresses the same objective as the \WinnerTakesAll. Parker's algorithm computes a local force vector for all robots (attraction by nearby targets and repulsion by other nearby robots). It does not require any central unit to determine their motion primitives and considers limited sensing and communication ranges, similar to this paper. Parker's algorithm determines the moving direction of robots by using the local force vector and moves the robots along this direction until they meet the available maneuverability at each time step. However, no theoretic guarantee with respect to the optimal solution was provided by this algorithm.

We created an environment of $200\times 200m$ square for comparison using MATLAB. The robots can move $10m$ per time step while the targets can move $5m$ per time step and randomly changed their direction every $25$ time steps. If the targets met the boundary of the environment, they picked a random direction that kept them within the environment. In each instance, robots and targets were randomly located initially. The sensing and communication ranges were set to $40m$ and $80m$, respectively. 

We empirically studied two cases: the first is to evaluate the objective value of the proposed greedy algorithm and Parker's algorithm for the same problem instance at a given time step; and the second is to apply the two algorithms over $200$ time steps starting from the same configuration.

\begin{figure*}[h]
\centering
\subfigure[]{\includegraphics[width=0.68\columnwidth]{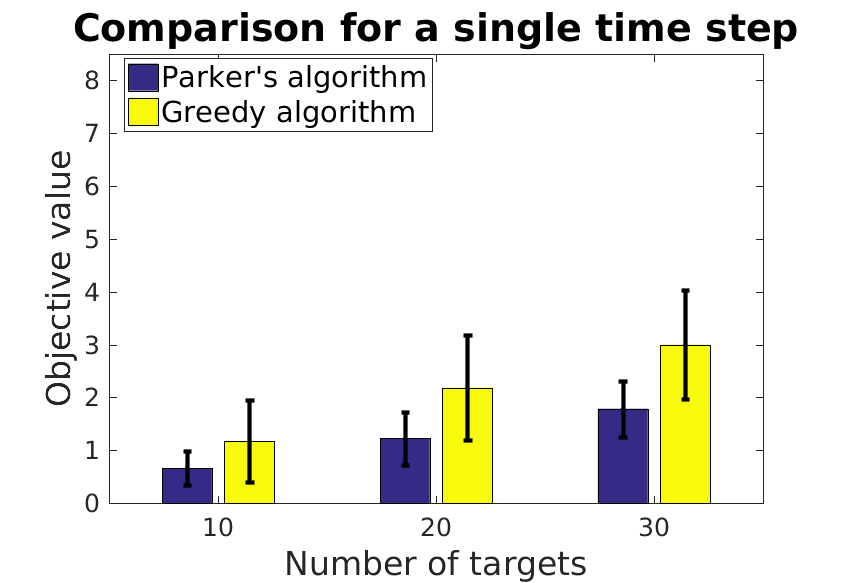}}
\subfigure[]{\includegraphics[width=0.68\columnwidth]{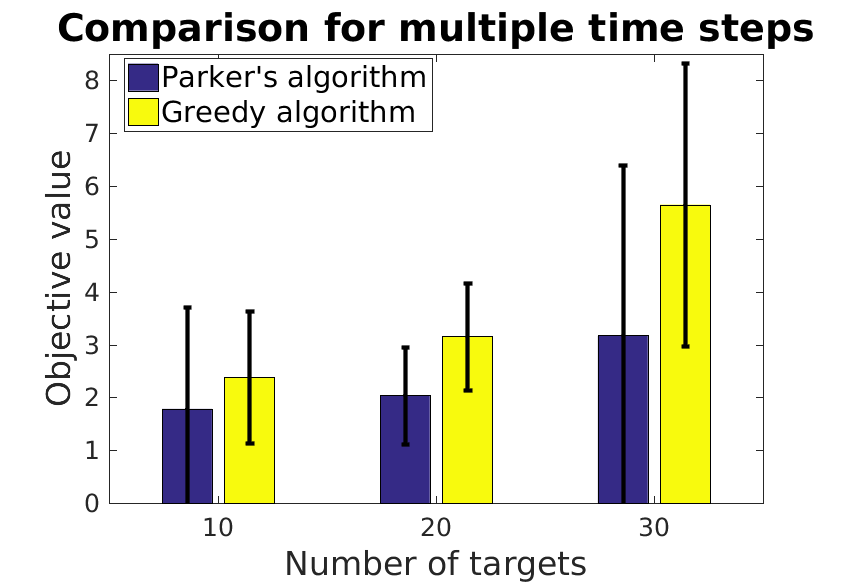}}
\subfigure[]{\includegraphics[width=0.68\columnwidth]{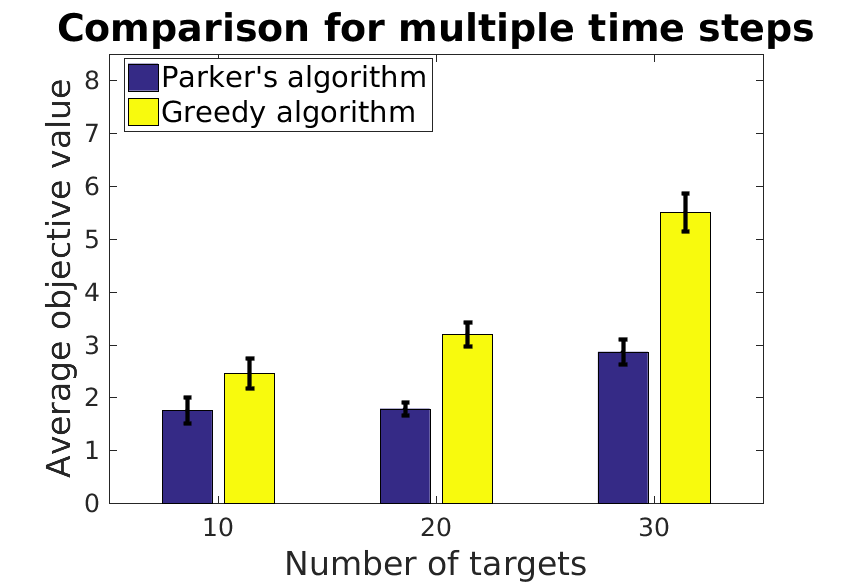}}
\caption{Comparison with the Parker's algorithm~\cite{parker2002distributed}. (a) $200$ instances were run. (b) $200$ time steps were run. (c) $200$ time steps were run to compare the metric proposed by Parker~\cite{parker2002distributed}. We used $10$ robots for all cases. We ran $10$ trials for (b) and (c). Bar graphs show the mean and standard deviation for different number of targets ($10$, $20$ and $30$ targets).\label{fig:comparative}
}
\end{figure*}

When both algorithms were applied to the same problem setup (Figure~\ref{fig:comparative}(a)), the objective values for both algorithms increased as the number of targets increased. Nevertheless, the greedy algorithm outperformed Parker's algorithm. This can be attributed to the fact that Parker's algorithm computes the local force vector based on a heuristic (get closer to the targets) but does not explicitly optimize the objective function of \WinnerTakesAll{}. In Figures~\ref{fig:comparative}(b) and~\ref{fig:comparative}(c), similar results can be seen when both algorithms generate different trajectories for robots after $200$ time steps. The comparison measure used in Figure~\ref{fig:comparative}(c) is the average of the objective value over time, first proposed by Parker~\cite{parker2002distributed}. These empirical simulations show the superior performance of the greedy algorithm over the existing method.

In summary, we find that our algorithms perform comparably with centralized, optimal algorithms and outperform the baseline algorithm. We also find that greedy algorithm has better performance than the decentralized algorithm from Parker~\cite{parker2002distributed}. In theory, the performance bound for the local algorithm worsens as $h$, the amount of communication available, decreases. However, in practice, we find that the local algorithm does not require a large number of layers to yield good performance, which reduces the computational and communication burden.





\section{Conclusion} \label{sec:conc}

This paper gives a new approach to solve the multi-robot multi-target assignment problem using greedy and local algorithms. Our work is motivated by scenarios where the robots would like to reduce their communication to solve the given assignment problem while at the same time maintaining some guarantees of tracking. We used powerful local communication framework employed by Flor\'een~\etal~\cite{floreen2011local} to leverage an algorithm that can trade-off the optimality with communication complexity. We empirically evaluated this algorithm and compared it with the baseline greedy strategies. 

Our immediate future work is to expand the scope of the problem to solve both versions of SATA over multiple horizons. In principle, we can replace each motion primitive input with a longer horizon trajectory and plan for multiple time steps (say, $H$ time steps). However, this comes at the expense of increased number of trajectories ${|P^{i}|}^H$ to choose from 
which will result in increased computational time. Furthermore, planning for a longer horizon will require prediction of targets' states far in the future which can lead to poorer tracking performance. We are also working on implementing the resulting algorithms on actual aerial robotic systems to carry out real-world experimentation.

\section*{Acknowledgement}

The authors would like to thank Dr. Jukka Suomela from Aalto University for fruitful discussion.

\addtolength{\textheight}{-3cm}


\bibliographystyle{IEEEtran}
\bibliography{IEEEabrv,yoon_refs}

\appendix

\section{Proof of Lemma~\ref{lemma:equiv_bottleneck}}~\label{append:equiv_bottleneck}

Equation (\ref{eqn:mpcp}) of a max-min linear program is equivalent to the following max-min problem if the scalar variable $w$ which represents the inner minimization is eliminated:
\begin{equation}
\begin{split}
\max_{x_{m}^{i}}\min_{j\in T}\ &\left(\sum_{i\in R}\sum_{m\in P^i}c_{i,m}^jx_{m}^{i}\right) \\
\mbox{subject to}\ \ \ &\sum_{m\in{P^i}}x_{m}^{i}\leq{1}\ \ \forall{i\in{R}} \\
&\ \ \ \ \ \ \ \ x_{m}^{i}\geq{0}\ \ \forall{m\in{P^i}}.
\end{split}
\label{eqn:equiv_mpcp}
\end{equation}

From Equations (\ref{eqn:mpcp}) and (\ref{eqn:equiv_mpcp}), the following relationship is satisfied:

\begin{equation}
w^*=\min_{j\in T}\left(\sum_{i\in R}\sum_{m\in P^i}c_{i,m}^j{x_{m}^{i}}^*\right).
\label{eqn:equi_relation}
\end{equation}

Since Equation (\ref{eqn:bottleneck}) does not require $x_{m}^{i}$ to be a linear value, Equation (\ref{eqn:bottleneck}) is equivalent to Equation (\ref{eqn:mpcp}) with additional integer constraints.

\section{Proof of Lemma~\ref{lemma:greedy}}~\label{append:greedy}

Considering $c_{i,m}^j$, which is a weight between $m$-th motion primitive of $i$-th robot and $j$-th target on graph $\mathcal{G}_S$, a quality of tracking ($w(\textbf{t}_j)$) for $j$-th target can be defined as follows:

\begin{equation}
w(\textbf{t}_j)\triangleq \max\{c_{i,m}^j\big|x_m^i=1,\ \forall i\in R, m\in P^i\}.
\label{eqn:w_function}
\end{equation}

Therefore, the sum of quality of tracking over all targets is:
\begin{equation}
\begin{split}
\sum_{j\in T} w(\textbf{t}_j)&=\sum_{j\in T}\max\{c_{i,m}^j\big|x_m^i=1,\ \forall i\in R, m\in P^i\} \\
&=\sum_{j\in T}\Big(\sum_{i\in R}y_{i}^{j}\Big(\sum_{m\in P^i}c_{i,m}^jx_{m}^{i}\Big)\Big).
\label{eqn:sum_w_function}
\end{split}
\end{equation}

Equation (\ref{eqn:sum_w_function}) is obtained by taking into account the conditional term of the first equation explicitly.
The last equation follows from the property that $y_i^j$ chooses the maximum value of $\sum_{m\in P^i}c_{i,m}^jx_{m}^{i}$ among all robots, which is shown in lines 10-14 of Algorithm~\ref{alg:greedy}. Therefore, the last equation is equal to the inner term of Equation (\ref{eqn:objective}).

\section{Greedy Performs Poorly for the \Bottleneck{} Variant}~\label{append:greedy_bottleneck}

We present an example of instance that shows an arbitrary poor performance of the greedy algorithm when applied to the \Bottleneck{} variant. Consider the following case where there are two robots ($\textbf{r}_i$) having two motion primitives ($\textbf{p}_m^i$) for each and two targets. The realization of the communication and sensing graphs are as in the following table. The tracking quality in this example corresponds to the number of targets being tracked.

\begin{center}
\begin{tabular}{ |c|c|c| }
 \hline
  & $\textbf{p}_1^1$, $\textbf{p}_1^2$ & $\textbf{p}_2^1$, $\textbf{p}_2^2$ \\ 
 \hline
 $\textbf{r}_1$ & $\textbf{t}_1$ & $\varnothing$ \\  
 \hline
 $\textbf{r}_2$ & $\varnothing$ & $\textbf{t}_2$ \\
 \hline
\end{tabular}
\label{table:2}
\end{center}

Let's apply the \Bottleneck{} version of greedy algorithm to this case. Since the objective of the \Bottleneck{} variant is to maximize the minimum tracking quality, the robot $1$ ($\textbf{r}_1$) chooses motion primitive $2$ ($\textbf{p}_2^1$) because choosing motion primitive $1$ ($\textbf{p}_1^1$) gives the value of $1$ while choosing motion primitive $2$ ($\textbf{p}_2^1$) gives the value of $0$. For the same reason, the robot $2$ ($\textbf{r}_2$) chooses motion primitive $1$ ($\textbf{p}_1^2$). This gives the total value of $0$, whereas the optimal solution is $2$ as the first robot and second robot choose motion primitive $1$ ($\textbf{p}_1^1$) and motion primitive $2$ ($\textbf{p}_2^2$), respectively. The similar case is reproducible with a larger number of robots, motion primitives, and targets. Thus, the simple greedy performs arbitrarily badly for the \Bottleneck{} variant.


\vfill

\end{document}